\documentclass{article}

\usepackage{microtype}
\usepackage{graphicx}
\usepackage{subcaption}
\usepackage{booktabs} %

\usepackage{hyperref}

\usepackage[accepted]{icml2026}

\usepackage{amsmath}
\usepackage{amssymb}
\usepackage{mathtools}
\usepackage{amsthm}
\usepackage{enumitem}

\usepackage[capitalize,noabbrev]{cleveref}

\theoremstyle{plain}
\newtheorem{theorem}{Theorem}[section]
\newtheorem{proposition}[theorem]{Proposition}
\newtheorem{lemma}[theorem]{Lemma}

\theoremstyle{definition}

\theoremstyle{remark}
\newtheorem{remark}[theorem]{Remark}

\usepackage[textsize=tiny]{todonotes}
\newcommand{\R}{\mathbb{R}}

\newcommand{\Bal}{\mathcal{B}}
\newcommand{\Hyp}{\mathcal{H}}

\icmltitlerunning{Balanced LoRA: Removing Parameter Invariance to Accelerate Convergence}

\begin{document}

\twocolumn[
  \icmltitle{Balanced LoRA: Removing Parameter Invariance \\
    to Accelerate Convergence}

  \icmlsetsymbol{equal}{*}

  \begin{icmlauthorlist}
    \icmlauthor{Valérie Castin}{yyy,sch}
    \icmlauthor{Kimia Nadjahi}{yyy,sch}
    \icmlauthor{Pierre Ablin}{comp}
    \icmlauthor{Gabriel Peyré}{yyy,sch}
  \end{icmlauthorlist}

  \icmlaffiliation{yyy}{École Normale Supérieure PSL, Paris, France}
  \icmlaffiliation{comp}{Apple, Paris, France}
  \icmlaffiliation{sch}{CNRS}

  \icmlcorrespondingauthor{Valérie Castin}{valerie.castin@ens.fr}

  \icmlkeywords{Machine Learning, ICML}

  \vskip 0.3in
]

\printAffiliationsAndNotice{}  %

\begin{abstract}
Low-Rank Adaptation (LoRA) is the most widely adopted method for fine-tuning large language models. Notably, LoRA is inherently overparameterized: multiple pairs of low-rank factors can yield the same adapted weight matrix. We show—both theoretically and empirically—that these pairs exhibit significantly different condition numbers. As a result, converging to different loss minimizers directly impacts the convergence rate of LoRA.
Building on this observation, we introduce Balanced Low-Rank Adaptation (BaLoRA), a variant of LoRA that projects iterates onto a balanced manifold. This manifold improves the conditioning of the loss landscape while preserving the adapted matrix. The projection step is computationally lightweight and integrates seamlessly into existing fine-tuning pipelines.
Empirically, BaLoRA converges faster than standard LoRA and achieves superior performance across a range of fine-tuning tasks.
\end{abstract}

\section{Introduction}

Pretrained foundation models are now ubiquitous in natural language processing \cite{brown2020language,qin2023chatgpt, taori2023stanford}, computer vision \cite{awais2025foundation}, and multimodal learning \cite{li2022blip, liu2023visual}, thanks to their ability to generalize from large-scale, diverse training data.
Their massive number of parameters allows them to capture a wide range of patterns, making them ideal bases for building specialized fine-tuned models on task-specific data.
However, as model sizes expand, full fine-tuning (updating all parameters) becomes impractical due to its prohibitive computational and memory costs.

To address this issue, parameter-efficient fine-tuning (PEFT) methods have become increasingly popular~\cite{houlsby2019parameter}. They leverage the observation that overparameterized models often exhibit low intrinsic dimensionality~\cite{li2018measuring,aghajanyan2021intrinsic}. These methods adapt large pretrained models by updating only a small subset of parameters, reducing computational costs while preserving performance. Among them, Low-Rank Adaptation (LoRA) ~\cite{hu2022lora} stands out as one of the most effective approaches. Rather than updating dense weight matrices, LoRA uses trainable low-rank matrices added to the frozen pretrained weights. Specifically, a pretrained weight matrix $W \in \mathbb{R}^{a \times b}$ is updated as $W + AB$, where $A \in \mathbb{R}^{a \times r}$, $B \in \mathbb{R}^{r \times b}$, and $r \ll \min(a, b)$ is the LoRA rank. By freezing $W$, this approach reduces the number of trainable parameters from $a \times b$ (full fine-tuning) to $r \times (a + b)$, achieving substantial memory savings while preserving adaptability.

Given its empirical success across diverse applications, LoRA has recently sparked theoretical interest, though still in its early stages. Recent studies have explored its expressivity in feedforward neural networks and transformers~\citep{zeng2023expressive}, analyzed its fine-tuning dynamics in the Neural Tangent Kernel (NTK) regime~\citep{malladi2023kernel,jang2024lora}, examined the distinct roles of the $A$ and $B$ matrices~\citep{zhu2024asymmetry,hayou2024lora+}, and investigated the effects of various initialization strategies~\citep{hayou2024impact,li2025beyond}.

In this paper, we analyze the asymptotic behavior of training dynamics of LoRA. Specifically, we bound the asymptotic convergence rate of LoRA when fine-tuning one layer of a deep, possibly non-linear, neural network, by establishing tight bounds on the loss condition number at convergence. Our study hinges on a key property of LoRA: its overparameterization. More precisely, for any invertible matrix $R \in \mathbb{R}^{r \times r}$, the low-rank factors $(AR, R^{-1}B)$ yield the same adapter $AB$.
In particular, if $(A, B)$ is a minimizer of the loss, then every $(AR, R^{-1}B)$ is also a minimizer, which yields a continuous manifold of minimizers.
With that in mind, our novel theoretical analysis highlights that the conditioning of the loss can vary along this manifold: some minimizers are flatter than others, which, from an optimization perspective, makes them better candidates to converge to.
Indeed, the loss around a flatter minimizer is better conditioned, so that the asymptotic convergence rate to the minimizer is faster.

Our analysis identifies that \textit{balanced minimizers}---minimizers $(A, B)$ satisfying $A^\top A = BB^\top$---achieve optimal conditioning of the loss. Although the balance condition has been studied in other contexts, such as linear networks~\cite{nguegnang2024convergence}, matrix factorization~\citep{ye2021global,ghosh2025learning}, and conservation laws in neural networks~\citep{marcotte2023abide}, \emph{no previous work has connected it the conditioning of the loss} when performing low-rank adaptation.
Similarly, while \cite{ghosh2025learning, zhang2025reflora} have introduced balanced parameterizations of the low-rank adapters, they do not show that such factorizations are optimal in terms of conditioning—which is one of the main contributions of this paper, as it provides a principled way to select the best-conditioned parameterization of a fixed adapter matrix.
Leveraging these insights, we introduce BaLoRA, an extension of LoRA that enforces balance during training to improve conditioning and accelerate convergence. BaLoRA is computationally lightweight, theoretically grounded, and compatible with standard optimization pipelines. The code for our experiments is available at \url{https://github.com/vcastin/balora}. Our contributions are the following:
\begin{figure}
    \centering
    \includegraphics[width=0.75\linewidth]{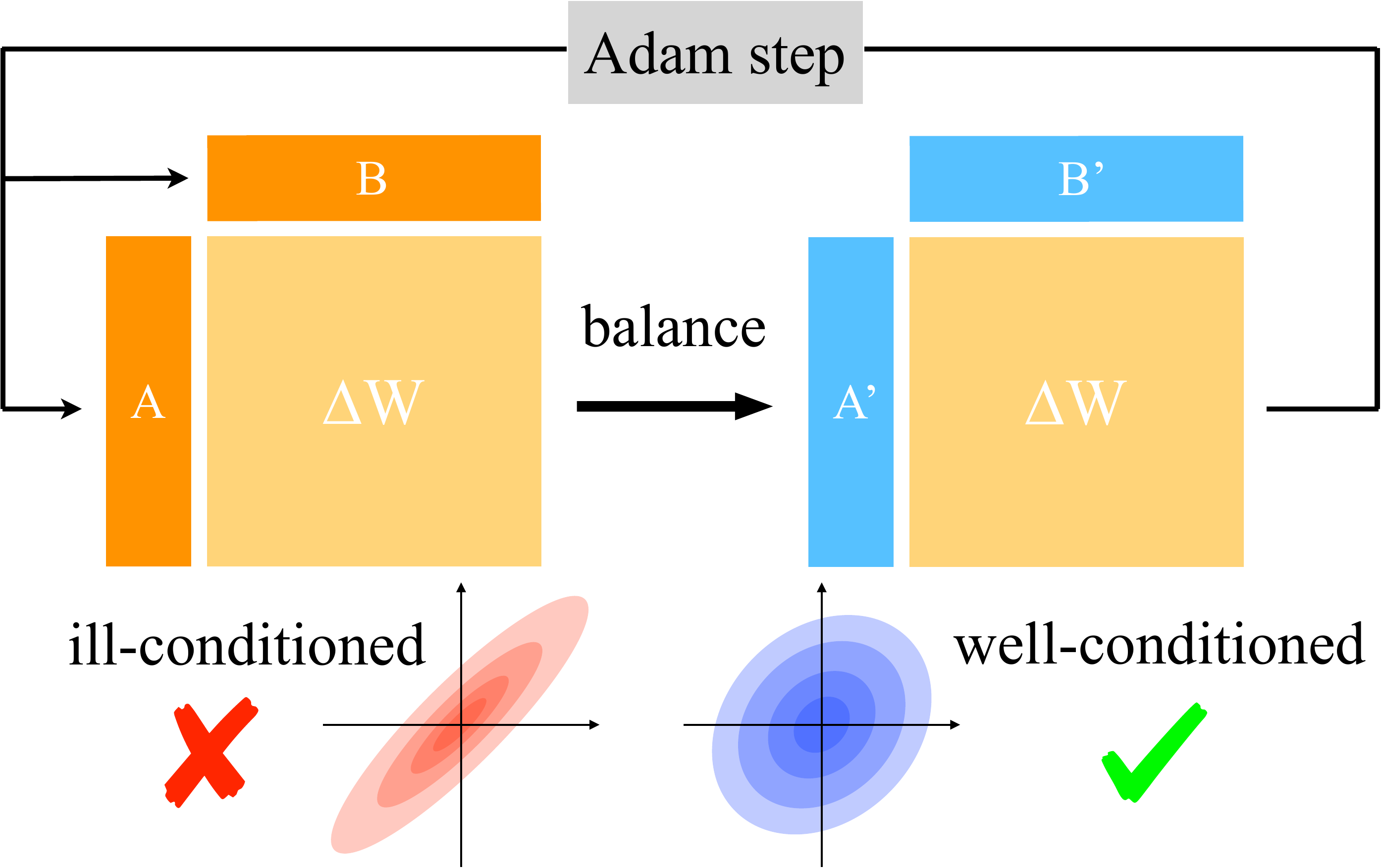}
    \caption{\textbf{BaLoRA in a nutshell. } BaLoRA projects the low-rank adapters $(A, B)$ on the balanced manifold after each optimizer step. This projection improves the conditioning of the loss while preserving the product $\Delta W = AB = A'B'$.}
    \label{fig:balora_illustration}
\end{figure}
\begin{itemize}[leftmargin=*]
    \item We introduce \textbf{BaLoRA}, a novel PEFT method that enforces balanced low-rank adapters throughout optimization with negligible computational overhead. (Section~\ref{sec:BaLoRA})
    \item We theoretically analyze the conditioning of LoRA's limiting points when fine-tuning one layer of a deep, possibly non-linear network, proving that balanced minimizers exhibit optimal conditioning. Consequently, BaLoRA converges to a better-conditioned minimizer than LoRA, improving its asymptotic convergence rate. (Section~\ref{sec:theory})
    \item When optimizing with gradient descent, we demonstrate that BaLoRA iterations can be reformulated as an intrinsic optimization scheme on the product $AB$, which provides an elegant and more interpretable geometric perspective on this algorithm. (Section~\ref{sec:BaLoRA})
    \item In our experiments on a range of large language models and datasets, BaLoRA consistently outperforms LoRA and matches or surpasses several state-of-the-art LoRA variants from the literature with negligible computational overhead. (Section~\ref{sec:experiments})
\end{itemize}

\subsection{Related Works}

\textbf{Parameter-efficient fine-tuning.}
To enable efficient adaptation without full retraining, residual adapters were first introduced for computer vision tasks~\cite{rebuffi2017learning} and later extended to NLP through adapter-based transfer learning~\cite{houlsby2019parameter}. Other prominent PEFT strategies include pruning-based approaches, such as Diff-Pruning~\cite{guo2020parameter}, and low-rank adapters~\cite{hu2022lora}. LoRA and its variants have been widely adopted, ranging from bridging language models with non-language tasks via LIFT~\cite{dinh2022lift} to fine-tuning image generation models~\cite{fan2023dpok}.
Theoretical analyses of LoRA in the Neural Tangent Kernel (NTK) regime have been conducted~\cite{malladi2023kernel,jang2024lora}, while its expressive power has been examined~\cite{zeng2023expressive}.

\textbf{LoRA optimization.}
LoRA is typically optimized using Adam~\cite{kingma2017adammethodstochasticoptimization} or AdamW~\cite{loshchilov2017decoupled}. Recent works have sought to adapt optimization strategies to low-rank structures. Riemannian approaches~\citep{bogachev2025lorameetsriemannionmuon,mo2025parameter} tackle overparameterization through manifold-based optimization, but often require specialized solvers.
Alternative algorithms for matrix factorization have been explored to strengthen convergence guarantees: \citet{zhang2024projected} analyze projected gradient descent and demonstrate convergence rates independent of condition numbers under specific assumptions; \citet{ward2023convergence} study alternating gradient descent, deriving bounds tied to spectral gaps; \citet{zhang2024riemannian} enhance gradient updates with Riemannian preconditioners; and \citet{olikier2025gauss} introduce Gauss--Southwell descent methods, emphasizing step-size and balancing interactions. Although matrix factorization shares similarities with LoRA, these studies do not address the fine-tuning of pretrained machine learning models.

\textbf{Initialization and convergence dynamics.}
Standard LoRA initialization sets one low-rank matrix to zero and the other to Gaussian noise, ensuring the model initially retains the behavior of the pretrained model while enabling low-rank adaptations during training. The initial update $A_0B_0$ is scaled by a factor $\alpha/r$, where $\alpha$ is a hyperparameter~\cite{hu2022lora}. Rank-stabilized scaling can be applied to mitigate gradient collapse at higher ranks~\cite{kalajdzievski2023rank}.
The convergence dynamics of LoRA are closely tied to results on deep linear networks and matrix factorization. For small step sizes, gradient descent converges under balancing conditions~\cite{nguegnang2024convergence}, which become exact conservation laws of the gradient flow in the vanishing step size limit, thereby explaining implicit biases~\cite{marcotte2023abide}. Random initialization can guarantee global convergence in asymmetric low-rank matrix factorization~\cite{ye2021global}. Large step sizes, however, may push training toward the edge of stability~\cite{cohen2021edge}, a phenomenon extensively studied in linear networks~\cite{ghosh2025learning,chen2023beyond}.
For LoRA specifically, \cite{hayou2024lora+} propose assigning different learning rates to the low-rank factors to improve efficiency, while \cite{xu2025understanding} analyze its dynamics through a gradient flow perspective, revealing an initial alignment phase followed by local convergence for small initialization scales.
Finally, in the context of multi-layer perceptrons with ReLU activations, \cite{lebeurrier2026path} leverage the rescaling symmetries of the network to reparametrize the weights at initialization, thus improving the conditioning of the training dynamics.

\textbf{Structural constraints.}
Low-rank models are inherently overparameterized, but structural constraints can mitigate their inefficiencies. Orthogonality has been explored for optimization on the Stiefel manifold~\cite{park2025riemannian,lion2026polar} and in QR-based initialization (OLoRA~\cite{buyukakyuz2024olora}). Other approaches exploit richer decompositions: DoRA~\cite{liu2024dora} decomposes weights into magnitude and direction; butterfly-based orthogonal fine-tuning (BOFT~\cite{liu2023parameter}) and Householder reflection adaptation (HRA~\cite{yuan2024bridging}) leverage structured orthogonal parameterizations; SVFT~\cite{lingam2024svft} utilizes singular vectors of pretrained weights; VeRA~\cite{kopiczko2023vera} reduces parameters by sharing low-rank random matrices with compact scaling vectors; GOAT~\cite{fan2025make} employs SVD-structured priors with mixture-of-experts alignment to refine initialization and scaling; and LoRA Done RITE~\cite{yen2024lora} enforces invariance of the optimization process under scaling and rotation transformations of adapters.

\section{Balanced Minimizers Are Best Conditioned} %
\label{sec:theory}

In this section, we analyze theoretically the asymptotic convergence rate of LoRA while fine-tuning one weight matrix of a general deep neural network (e.g., a Transformer \cite{vaswani2023attentionneed}).
We start by highlighting that the conditioning of the limit of LoRA is connected to the asymptotic convergence rate of the iterates.
Denote $f(A,B)\in \R$ the loss to optimize.
We assume throughout this section that $f$ takes the form of a regression loss $f(A, B)\coloneqq \frac12\lVert h(AB) - Z\rVert_F^2$, where $Z$ is the target matrix, $\| \cdot \|_F$ is the Frobenius norm and $h(AB)\in \R^{d\times n}$ is the output of a generic, possibly non-linear neural network.

\paragraph{Asymptotic rates and conditioning for gradient descent.}
For simplicity, we first assume that LoRA is trained with gradient descent.
LoRA iterations with step size $\gamma$ and initialization $(A_0, B_0)$ read
\begin{equation}
\label{eq:lora_iterations}
    \begin{cases}
        A_{t+1} = A_t -\gamma \nabla_{A_t} f(A_t, B_t), \\
        B_{t+1} = B_t - \gamma \nabla_{B_t} f(A_t, B_t). 
    \end{cases}
\end{equation}
When $f$ is the quadratic loss over a linear neural network, iterations (\ref{eq:lora_iterations}) are known to converge to a minimizer $(A, B)$ of the loss \cite{nguegnang2024convergence}, with an unknown convergence rate.
To gain insights on the convergence rate of $f$ to its optimum, we focus on the condition number $\kappa\coloneqq \kappa(f)(A,B)$ of $f$ at a minimizer $(A, B)$.
Letting $H$ be the Hessian of $f$ at $(A,B)$, it is defined as $\kappa \coloneqq \lambda_\text{max}(H) / \lambda_{\text{min}\neq 0}(H)$, where $\lambda_{\text{min}\neq 0}(H)$ is the smallest non-zero eigenvalue of $H$.
The following classical result shows that a smaller condition number implies faster asymptotic convergence (\textit{e.g.}, \citet{bach2024learning}). 

\begin{proposition}
\label{lem:convergence_rate_and_conditioning}
    Assume the iterations (\ref{eq:lora_iterations}) converge to a minimizer $(A,B)$ of $f$. Denoting $H$ the Hessian of $f$ at $(A,B)$, let $L\coloneqq \lambda_\mathrm{max}(H)$ and $\mu\coloneqq \lambda_{\mathrm{min}\neq 0}(H)$.
    Then,
    $\limsup_{t\to +\infty}\frac{f(A_{t+1}, B_{t+1}) - f(A, B)}{f(A_t, B_t) - f(A,B)}\le \max((1 - \gamma \mu)^2, (1 - \gamma L)^2).$
    Taking $\gamma = 2/(L+\mu)$ to minimize the right-hand side, and denoting $\kappa \coloneqq L / \mu$,
    $\limsup_{t\to +\infty}\frac{f(A_{t+1}, B_{t+1}) - f(A, B)}{f(A_t, B_t) - f(A,B)}\le \left ( \frac{\kappa - 1}{\kappa + 1}\right )^2\,.$
    Hence, the smaller $\kappa\ge 1$, the faster the convergence to $(A, B)$ asymptotically.
\end{proposition}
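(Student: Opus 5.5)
The plan is to linearize gradient descent around the limit point $x^\star \coloneqq (A,B)$ and track the error along the range of $H$. The kernel of $H$ has to be handled separately, because the minimizers form a manifold.

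Write $x_t \coloneqq (A_t,B_t)$ and $e_t \coloneqq x_t - x^\star$. Since $f$ is smooth (it is built from the smooth network $h$) and $\nabla f(x^\star)=0$, Taylor's theorem gives $\nabla f(x_t) = H e_t + O(\|e_t\|^2)$. The iteration therefore reads
\[
e_{t+1} = (I-\gamma H)e_t + O(\|e_t\|^2).
\]
For the loss we have $f(x_t)-f(x^\star) = \tfrac12 e_t^\top H e_t + O(\|e_t\|^3)$.

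Let $P$ be the orthogonal projector onto $\mathrm{range}(H)$. Then $\tfrac12 e^\top H e = \tfrac12 (Pe)^\top H (Pe)$, and on $\mathrm{range}(H)$ the matrix $I-\gamma H$ has eigenvalues $1-\gamma\lambda$ with $\lambda\in[\mu,L]$. Its squared operator norm is therefore at most $q \coloneqq \max((1-\gamma\mu)^2,(1-\gamma L)^2)$, since $|1-\gamma\lambda|$ is maximized at an endpoint of $[\mu, L]$. Because $I-\gamma H$ commutes with $H$, we get
\[
(Pe_{t+1})^\top H (Pe_{t+1}) \le q\,(Pe_t)^\top H(Pe_t) + \text{higher-order terms}.
\]
Dividing by $f(x_t)-f(x^\star)$ and taking $\limsup$ would give the first claim, provided the higher-order terms are $o\big((Pe_t)^\top H (Pe_t)\big)$.

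That proviso is the main obstacle. The error $e_t$ may have a large component $(I-P)e_t$ along $\ker H$, which is tangent to the manifold of minimizers. This component is neither contracted nor controlled by the loss gap, so the remainder $O(\|e_t\|^2)$ need not be small compared with $\|Pe_t\|^2$.

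My first attempt would use the structure of the minimizer set: locally, the set $\{(A R, R^{-1}B)\}$, together with any other minimizers, forms a smooth manifold $\mathcal M$ whose tangent space at $x^\star$ is $\ker H$. This is the Morse–Bott setting. I would replace $x^\star$ at each step by the nearest point $\pi(x_t)\in\mathcal M$. With $\tilde e_t \coloneqq x_t-\pi(x_t)$, which is normal to $\mathcal M$ at $\pi(x_t)$, we have $f(x_t)-f^\star = \tfrac12 \tilde e_t^\top H_{\pi(x_t)}\tilde e_t + O(\|\tilde e_t\|^3)$, and the Hessian at $\pi(x_t)$ is positive definite on the normal space. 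The one-step analysis above then holds with errors of order $O(\|\tilde e_t\|^3)$ relative to $\|\tilde e_t\|^2$. The Hessians $H_{\pi(x_t)}$ converge to $H$ because $\pi(x_t)\to x^\star$, so their extreme nonzero eigenvalues converge to $L$ and $\mu$, and the $\limsup$ bound follows.

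If the Morse–Bott structure is not available in full generality, I would state it as the regularity assumption implicit in ``classical'', in line with \citet{bach2024learning}, and give the clean argument in the quadratic model. There $f(x_t)-f^\star$ evolves exactly as $\sum_i (1-\gamma\lambda_i)^{2t}c_i\lambda_i$, and the ratio bound is immediate.

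The second claim is elementary optimization over $\gamma$. The quantity $\max(|1-\gamma\mu|,|1-\gamma L|)$ is minimized when $1-\gamma\mu = -(1-\gamma L)$, that is, at $\gamma = 2/(L+\mu)$, where it equals $(L-\mu)/(L+\mu) = (\kappa-1)/(\kappa+1)$. Squaring gives the stated bound. The function $\kappa\mapsto(\kappa-1)/(\kappa+1)$ is increasing on $[1,\infty)$, which gives the final monotonicity remark.
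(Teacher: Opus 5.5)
Your argument is correct, and it is more complete than what the paper provides. The paper gives no proof of this proposition: it cites \citet{bach2024learning} and offers only the heuristic of replacing $f$ by its second-order Taylor expansion at the fixed limit $(A,B)$, which is exactly your first attempt. You rightly note that this heuristic does not close on its own. The $r^2$-dimensional kernel of $H$, coming from the invariance $(AR,R^{-1}B)$, leaves a component of $e_t$ that is neither contracted nor controlled by $f(x_t)-f^\star$. Expanding instead at the moving base point $\pi(x_t)$, under the Morse--Bott condition $\ker H=T_{x^\star}\mathcal M$, is the right repair.

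That hypothesis is genuinely necessary, not a convenience. Take $f(x,y)=\tfrac12x^2+y^4$ and $\gamma=1$. Then $L=\mu=1$, so both stated bounds equal $0$, yet $f(x_t,y_t)=y_t^4$ for $t\ge1$ and the ratio tends to $1$. The same degeneracy occurs inside the LoRA class, e.g.\ with $a=2$, $b=r=1$ and $f(A,B)=\tfrac12(A_1B-1)^2+\tfrac12(A_2B)^4$.

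For the one-layer linear loss \eqref{eq:toy_loss} with $\sigma_r(Z)>\sigma_{r+1}(Z)$, $\ker H$ is exactly the tangent space of the orbit. This is the fact $\dim\ker H=r^2$ used in the proof of Proposition~\ref{prop:full_hessian_spectrum}, so your hypothesis holds where the paper actually applies the result.

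Two details for the write-up. First, expand $f(x_{t+1})$ around the same point $\pi(x_t)$. Then $f(x_{t+1})-f^\star=\tfrac12 v^\top H_{\pi(x_t)}v+O(\|v\|^3)$ with $v=(I-\gamma H_{\pi(x_t)})\tilde e_t+O(\|\tilde e_t\|^2)$, and no normality of $v$ is needed. Second, note that Morse--Bott at $x^\star$ propagates to nearby points $p\in\mathcal M$, because $T_p\mathcal M\subseteq\ker H_p$ while $\operatorname{rank}H_p$ is lower semicontinuous. This gives both positive definiteness of $H_{\pi(x_t)}$ on the normal space and convergence of its nonzero spectrum into $[\mu,L]$.
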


The intuition of Proposition \ref{lem:convergence_rate_and_conditioning} is to approximate the loss locally by the quadratic function given by its second-order Taylor's expansion around the minimizer of the trajectory.
Hence, once the iterates are sufficiently close to their limit, the convergence rate in this regime (referred to as the asymptotic convergence rate) is governed by the conditioning of the Hessian of the loss at the limit.

\paragraph{Asymptotic rates and conditioning for scaled sign-GD.}
Proposition \ref{lem:convergence_rate_and_conditioning} is specific to gradient descent, while LoRA is typically trained with AdamW. Analyzing the local convergence rate of AdamW itself is still an open question, so we study instead a scaled sign-GD scheme, which corresponds to Adam without momentum \citep{pethick2025training}. Denoting $\theta_t \coloneqq (A_t,B_t)$, we replace \eqref{eq:lora_iterations} by
\begin{equation}
\label{eq:lora_iterations_scaled_sign}
\theta_{t+1} = \theta_t - \gamma \lVert \nabla f(\theta_t)\rVert_1 s_t,
\qquad
s_t \in \partial \lVert \nabla f(\theta_t)\rVert_1,
\end{equation}
where $\partial$ denotes the subgradient.
When all coordinates of $\nabla f(\theta_t)$ are non-zero, this simply becomes $\theta_{t+1} = \theta_t - \gamma \lVert \nabla f(\theta_t)\rVert_1 \,\mathrm{sign}(\nabla f(\theta_t))$.
The following proposition, proved in Appendix~\ref{appsubsec:signed_gd_conditioning}, shows that the local asymptotic rate of scaled sign-GD is controlled by the adapted $\ell^\infty$ condition number $\kappa_\infty$, which is itself bounded in terms of the usual Euclidean condition number $\kappa$.

\begin{proposition}
\label{prop:scaled_signgd_conditioning}
    Assume that the iterations \eqref{eq:lora_iterations_scaled_sign} converge to a minimizer $\theta^\star$ of $f$, and denote by $H \coloneqq \nabla^2 f(\theta^\star)$ the Hessian at that limit. Let $L_\infty \coloneqq \sup_{u\neq 0}\langle Hu,u\rangle/\lVert u\rVert_\infty^2$ and $\mu_\infty \coloneqq \inf_{u\in \ker(H)^\perp\setminus\{0\}} \lVert Hu\rVert_1^2/\langle Hu,u\rangle$, and assume that $\mu_\infty>0$. Define $\kappa_\infty \coloneqq L_\infty/\mu_\infty$. If $D \coloneqq r(a+b)$ denotes the ambient dimension of $(A,B)$, then $\kappa/D \le \kappa_\infty \le D\kappa$. Moreover, with the natural choice $\gamma = 1/L_\infty$, one has
    $
    \limsup_{t\to+\infty}\frac{f(\theta_{t+1})-f(\theta^\star)}{f(\theta_t)-f(\theta^\star)}
    \le 1-\frac{1}{\kappa_\infty}
    \le 1-\frac{1}{D\kappa}.
    $
\end{proposition}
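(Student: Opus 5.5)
The proof has two independent parts: the norm-equivalence bounds relating $\kappa_\infty$ to $\kappa$, and the local contraction estimate for scaled sign-GD. We treat them separately and combine them at the end.

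\emph{Comparing $\kappa_\infty$ and $\kappa$.} We use the standard equivalences $\lVert u\rVert_\infty\le \lVert u\rVert_2\le \sqrt D\lVert u\rVert_\infty$ and $\lVert v\rVert_2\le\lVert v\rVert_1\le\sqrt D\lVert v\rVert_2$ in $\R^D$.
For $L_\infty$, dividing $\langle Hu,u\rangle$ by $\lVert u\rVert_\infty^2$ instead of $\lVert u\rVert_2^2$ changes the ratio by a factor in $[1,D]$. Hence $L\le L_\infty\le DL$.
For $\mu_\infty$, restrict to $u\in\ker(H)^\perp$ and write $u$ in an eigenbasis of $H$ with positive eigenvalues $\lambda_i\ge\mu$. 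Then $\lVert Hu\rVert_2^2=\sum\lambda_i^2u_i^2$ and $\langle Hu,u\rangle=\sum\lambda_iu_i^2$, so $\lVert Hu\rVert_2^2/\langle Hu,u\rangle\in[\mu,L]$. The infimum of this Euclidean quotient is exactly $\mu$, attained at a $\mu$-eigenvector.
Passing from $\lVert Hu\rVert_2$ to $\lVert Hu\rVert_1$ multiplies the quotient by a factor in $[1,D]$, so $\mu\le\mu_\infty\le D\mu$.
Combining the two, $\kappa_\infty=L_\infty/\mu_\infty\le DL/\mu=D\kappa$ and $\kappa_\infty\ge L/(D\mu)=\kappa/D$. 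In each direction we pair the crude factor on one quantity with the sharp bound on the other.

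\emph{Local rate.} We follow the proof of \cref{lem:convergence_rate_and_conditioning} and replace $f$ near $\theta^\star$ by its quadratic model $q(\theta)=\frac12\langle H(\theta-\theta^\star),\theta-\theta^\star\rangle$, with $\nabla q=He$ where $e=\theta-\theta^\star$. For scaled sign-GD the step is $\delta=-\gamma\lVert g\rVert_1 s$ with $g=\nabla f(\theta_t)$, $\lVert s\rVert_\infty\le1$ and $\langle g,s\rangle=\lVert g\rVert_1$.
The exact quadratic descent identity gives
\[q(\theta_{t+1})=q(\theta_t)-\gamma\lVert g\rVert_1^2+\tfrac{\gamma^2}{2}\lVert g\rVert_1^2\langle Hs,s\rangle\le q(\theta_t)-\gamma\lVert g\rVert_1^2\bigl(1-\tfrac{\gamma L_\infty}{2}\bigr).\]
With $\gamma=1/L_\infty$ this becomes $q(\theta_{t+1})\le q(\theta_t)-\lVert g\rVert_1^2/(2L_\infty)$.

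Next we need a Polyak–Łojasiewicz-type inequality in the dual norm. Decompose $e=e_\perp+e_0$ with $e_0\in\ker H$; then $g=He_\perp$ and $q=\frac12\langle He_\perp,e_\perp\rangle$. By the definition of $\mu_\infty$, $\lVert g\rVert_1^2\ge\mu_\infty\langle He_\perp,e_\perp\rangle=2\mu_\infty q$. Substituting gives $q(\theta_{t+1})\le(1-\mu_\infty/L_\infty)q(\theta_t)$, which is the rate $1-1/\kappa_\infty$. The final inequality $1-1/\kappa_\infty\le 1-1/(D\kappa)$ follows from the first part.

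\emph{Main obstacle.} The delicate step is transferring this quadratic computation to the true $f$ along the trajectory, in the limsup sense. Three things must be controlled: the Taylor remainders $f-f(\theta^\star)=q+O(\lVert e\rVert^3)$ and $\nabla f=He+O(\lVert e\rVert^2)$; the degenerate directions in $\ker H$, where $f$ may not be flat to higher order; and the nonsmooth sign map, where the subgradient $s_t$ can flip discontinuously.

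We plan to avoid any continuity argument on $s_t$. Both the descent step and the PL step only use $\lVert s\rVert_\infty\le1$ and $\langle g,s\rangle=\lVert g\rVert_1$, and these hold for every subgradient. It therefore suffices to show that the relative errors $O(\lVert e\rVert)$ vanish as $\theta_t\to\theta^\star$.

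The remaining difficulty is the kernel. Here we would use the structure of the paper's setting: the minimizers form a manifold, namely the orbit $(AR,R^{-1}B)$ described in the introduction, whose tangent space is $\ker H$. We would then measure $e$ as the distance to this manifold rather than to the fixed point $\theta^\star$, so that $f-f^\star$ and $\lVert\nabla f\rVert$ are controlled by the normal component alone. This is the step we expect to require the most care; if it cannot be made rigorous, we would add it as a standing regularity assumption, as is implicit in \cref{lem:convergence_rate_and_conditioning}.
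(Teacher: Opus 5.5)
Your proposal is correct and follows essentially the same route as the paper's proof: an $\ell^\infty$ descent lemma with constant $L_\infty$, an $\ell^1$ Polyak--\L{}ojasiewicz inequality with constant $\mu_\infty$ on $\ker(H)^\perp$, and the norm-equivalence bounds $L\le L_\infty\le DL$ and $\mu\le\mu_\infty\le D\mu$. The only minor differences are in the limit step. The paper obtains the descent inequality directly on $f$ from continuity of the Hessian along the segment from $\theta_t$ to $\theta_{t+1}$, rather than through a quadratic model. For the kernel directions, it makes the same regularity assertion you plan to make: the gauge-invariance manifold is tangent to $\ker(H)$, so the remainders are $o(\lVert\delta_{t,\perp}\rVert)$.
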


In view of Propositions \ref{lem:convergence_rate_and_conditioning} and \ref{prop:scaled_signgd_conditioning}, it is therefore crucial to understand the condition number of the different minimizers of $f$.
Losses of the form $f(A, B)= \frac12\lVert h(AB) - Z\rVert_F^2$ have an $r^2$-dimensional manifold $\mathcal{M}$ of minimizers as, for any minimizer $(A, B)$, the couple $(AR, R^{-1}B)$ for $R$ an invertible $r\times r$ matrix leads to the same adapter $AB$, so is also a minimizer of $f$.
Among such minimizers, we single out the submanifold $\mathcal{B_\text{min}}$ of \emph{balanced minimizers}, defined as all couples $(A, B)\in \mathcal{M}$ satisfying the \emph{balancing condition} 
$
    A^\top A = BB^\top.
$
We prove in the next subsections that balanced minimizers are optimally conditioned, and discuss the balancing condition in Section \ref{sec:BaLoRA}.

\subsection{The One-Layer Linear Case}

We start with a tractable setting that still captures the complexity of the problem.
Consider a pre-trained one-layer linear network $W\in \R^{a\times b}$ and a \emph{target} $W^\star \in \R^{a\times b}$, representing the ideal fine-tuned model \citep{zeng2023expressive}.
Let $Z\coloneqq W^\star - W$ be the gap between the pretrained and target models. LoRA then consists in minimizing the loss
\begin{equation}
\label{eq:toy_loss}
    f \colon (A, B) \in \R^{a\times r}\times \R^{r\times b} \mapsto \frac12\lVert Z - AB \rVert_F^2\,.
\end{equation}
This setup generalizes matrix factorization \citep{ye2021global, ghosh2025learning}, where $\mathrm{rk}\, Z = r$, by allowing $Z$ to have rank $\mathrm{rk}\, Z > r$, which, to the best of our knowledge, has not been studied before. As detailed in the following paragraphs, having $\mathrm{rk}\, Z > r$ makes the mathematical analysis significantly more involved, as the Hessian of $f$---which has to be computed and diagonalized to investigate the conditioning of $f$---becomes the sum of two terms whose codiagonalization is non-trivial, while there is only one term in the matrix factorization case.

\paragraph{The matrix factorization case.}
Let us first assume that $\mathrm{rk}\, Z =r$. Then, the problem reduces to matrix factorization, and the minimal value of the loss $f$ is zero. We explicitly compute the Hessian of $f$ at any minimizer $(A,B)$, and determine its full spectrum and corresponding condition number. 

\begin{proposition}
\label{prop:cropped_hessian_spectrum}
    Let $(A, B)\in \R^{a\times r}\times \R^{r\times b}$ be a global minimizer of the loss $f$ (\ref{eq:toy_loss}).
    Assume $\mathrm{rk}\, Z = r$.
    The Hessian of $f$ at $(A,B)$ reads
    $$H = \begin{pmatrix}
        (BB^\top) \otimes I_a & B\otimes A \\B^\top \otimes A^\top  & I_b\otimes (A^\top A)
    \end{pmatrix},$$
    and its eigenvalues are: $0$ (with multiplicity $r^2$), $\sigma_i(A)^2 + \sigma_j(B)^2$ for $1\le i,j\le r$ (each with multiplicity $1$), $\sigma_i(A)^2$ (each with multiplicity $b-r$), and $\sigma_j(B)^2$ (each with multiplicity $a-r$). 
    Therefore, %
    $\kappa(f)(A,B) = (\sigma_1(A)^2 + \sigma_1(B)^2)/\min(\sigma_r(A)^2, \sigma_r(B)^2)$. %
\end{proposition}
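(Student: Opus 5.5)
We compute the Hessian by a second-order expansion and then diagonalize it in the singular bases of $A$ and $B$.

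\textbf{Step 1: second-order expansion.} Perturb $(A,B)$ to $(A+U,B+V)$. Since $Z=AB$ at a minimizer (because $\mathrm{rk}\,Z=r$, the minimum is zero), we have $Z-(A+U)(B+V) = -(UB+AV) - UV$. Hence
$$f(A+U,B+V)=\tfrac12\lVert UB+AV\rVert_F^2+O(\lVert(U,V)\rVert^3).$$
The cross term $\langle UB+AV, UV\rangle$ is third order, and the residual vanishes, so there is no $\langle Z-AB, UV\rangle$ term. The Hessian is therefore the Gauss--Newton form $J^\top J$, where $J(U,V)=UB+AV$. With column-major vectorization, $\mathrm{vec}(UB)=(B^\top\otimes I_a)\mathrm{vec}\,U$ and $\mathrm{vec}(AV)=(I_b\otimes A)\mathrm{vec}\,V$. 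Multiplying out $J^\top J$ gives the stated block matrix, using $(B\otimes I)(I\otimes A)=B\otimes A$.

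\textbf{Step 2: reduce to diagonal factors.} Take SVDs $A=P_A\Sigma_A Q_A^\top$ and $B=P_B\Sigma_B Q_B^\top$ (thin, $r\times r$ cores), and complete $P_A$ and $Q_B$ to orthonormal bases of $\R^a$ and $\R^b$. The nonzero spectrum of $J^\top J$ equals that of $JJ^\top$, which acts on $M\in\R^{a\times b}$ as
$$JJ^\top(M)=MB^\top B+AA^\top M.$$
This is a Sylvester operator. In the bases given by the left singular vectors of $A$ (eigenvalues $\sigma_i(A)^2$, padded with $a-r$ zeros) and the right singular vectors of $B$ (eigenvalues $\sigma_j(B)^2$, padded with $b-r$ zeros), its eigenvalues are all sums $\lambda_i(AA^\top)+\lambda_j(B^\top B)$ over the $ab$ index pairs. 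These sums are:
\begin{itemize}
\item $\sigma_i(A)^2+\sigma_j(B)^2$ for $i,j\le r$;
\item $\sigma_i(A)^2$, each with multiplicity $b-r$;
\item $\sigma_j(B)^2$, each with multiplicity $a-r$;
\item $0$, with multiplicity $(a-r)(b-r)$.
\end{itemize}

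\textbf{Step 3: recover the full spectrum of $H$.} $H=J^\top J$ acts on a space of dimension $r(a+b)$. Its nonzero eigenvalues coincide with those of $JJ^\top$; these are exactly the first three families, since $A$ and $B$ have full rank $r$ (because $AB=Z$ has rank $r$) and so every listed $\sigma$ is positive. Their total count is $r^2+r(b-r)+r(a-r)=r(a+b)-r^2$. The remaining $r^2$ eigenvalues of $H$ are zero, matching $\dim\ker J=r^2$, the tangent space of the orbit $(AR,R^{-1}B)$. This gives the claimed list with multiplicities.

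\textbf{Step 4: condition number.} The largest eigenvalue is $\sigma_1(A)^2+\sigma_1(B)^2$. The smallest nonzero eigenvalue is the minimum over the three families; since $a,b>r$ are implicitly assumed, both $\sigma_r(A)^2$ and $\sigma_r(B)^2$ appear. This minimum is $\min(\sigma_r(A)^2,\sigma_r(B)^2)$, which is dominated by the sums in the first family.

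\textbf{Main obstacle.} The main point requiring care is the multiplicity bookkeeping in Step 3, which rests on the $J^\top J$ versus $JJ^\top$ correspondence together with $\ker J$ having dimension exactly $r^2$. Both follow from the full rank of $A$ and $B$. One must also state the assumption $a,b>r$; if $a=r$ or $b=r$, the corresponding family is simply absent and the minimum in $\kappa$ should be adjusted accordingly.
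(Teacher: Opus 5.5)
Your proof is correct and follows essentially the same route as the paper. The paper writes $H = MM^\top$ with $M$ equal to your $J^\top$, diagonalizes the Kronecker sum $M^\top M = (B^\top B)\otimes I_a + I_b\otimes (AA^\top)$ (your Sylvester operator $JJ^\top$), and transfers the nonzero eigenpairs back; the only difference is that it identifies $\ker H$ explicitly as $\{(AR,-RB) : R\in\R^{r\times r}\}$ rather than counting dimensions, and your caveat that the formula for $\kappa$ tacitly needs $a,b>r$ is a valid point the paper leaves implicit.
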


Proposition \ref{prop:cropped_hessian_spectrum}, proved in Section \ref{proof:cropped_hessian_spectrum}, establishes a direct link between the condition number of a minimizer $(A, B)$ and the singular values of $A$ and $B$, which allows us to identify optimally conditioned minimizers.

\begin{proposition}
    \label{prop:balanced_conditioning_cropped}
    Assume $\mathrm{rk}\, Z = r$. 
    Then, all balanced minimizers (i.e., such that $A^\top A = BB^\top$) have the minimal condition number $\kappa_\mathrm{min} = 2\sigma_1(Z)/\sigma_r(Z)$ among all minimizers.
\end{proposition}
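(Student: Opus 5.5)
We will use Proposition~\ref{prop:cropped_hessian_spectrum}, which gives $\kappa(f)(A,B) = (\sigma_1(A)^2 + \sigma_1(B)^2)/\min(\sigma_r(A)^2, \sigma_r(B)^2)$ at any minimizer. The argument then reduces to a lower bound on this quantity valid for every factorization $AB = Z$, together with a check that balanced factorizations attain it. Since $\mathrm{rk}\, Z = r$ and $f \ge 0$, the minimizers are exactly the pairs with $AB = Z$. In particular $A$ and $B$ both have rank $r$, so $\sigma_r(A), \sigma_r(B) > 0$.

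\textbf{Lower bound.} Singular value inequalities for products give $\sigma_1(Z) = \sigma_1(AB) \le \sigma_1(A)\sigma_1(B)$. Since $A$ has full column rank and $B$ full row rank, we also have $\sigma_r(Z) = \sigma_r(AB) \ge \sigma_r(A)\sigma_r(B)$; this follows from $\lVert ABx\rVert \ge \sigma_r(A)\lVert Bx\rVert$ combined with the min-max characterization on the row space of $B$. By AM--GM, $\sigma_1(A)^2 + \sigma_1(B)^2 \ge 2\sigma_1(A)\sigma_1(B) \ge 2\sigma_1(Z)$. On the other hand, $\min(\sigma_r(A)^2,\sigma_r(B)^2) \le \sigma_r(A)\sigma_r(B) \le \sigma_r(Z)$. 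Dividing the two bounds yields $\kappa \ge 2\sigma_1(Z)/\sigma_r(Z)$ for every minimizer.

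\textbf{Attainment.} Let $(A,B)$ be balanced, so $A^\top A = BB^\top =: S$, and $A$ and $B$ share the singular values $s_i = \sqrt{\lambda_i(S)}$. Then
\[
Z^\top Z = B^\top A^\top A B = B^\top S B = B^\top (BB^\top) B .
\]
The nonzero eigenvalues of $B^\top B B^\top B$ are those of $(BB^\top)^2 = S^2$, so $\sigma_i(Z) = s_i^2$. Plugging into the formula gives $\kappa = 2s_1^2/s_r^2 = 2\sigma_1(Z)/\sigma_r(Z)$. For completeness we should note that balanced minimizers exist, via the SVD $Z = U\Sigma V^\top$ and the choice $A = U\Sigma^{1/2}$, $B = \Sigma^{1/2}V^\top$, so that $\kappa_\mathrm{min}$ is indeed the minimum.

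\textbf{Main obstacle.} The only delicate point is justifying $\sigma_r(AB) \ge \sigma_r(A)\sigma_r(B)$ in the rectangular setting, where one must use the full-rank structure. We plan to prove it by restricting to the $r$-dimensional row space of $B$: for unit $x$ in that row space, $\lVert Bx\rVert \ge \sigma_r(B)$, and then $\lVert A(Bx)\rVert \ge \sigma_r(A)\lVert Bx\rVert$ because $A$ is injective. Applying Courant--Fischer to the $r$-dimensional subspace then gives the bound.
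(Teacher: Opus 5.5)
Your proof is correct and takes the same route the paper implicitly takes (the paper gives no standalone proof of this statement): plug the condition-number formula of Proposition~\ref{prop:cropped_hessian_spectrum} into the bounds $\sigma_1(Z)\le\sigma_1(A)\sigma_1(B)$ and $\sigma_r(Z)\ge\sigma_r(A)\sigma_r(B)$, which both appear in the paper's proof of Proposition~\ref{prop:full_hessian_spectrum}, and use $\sigma_i(A)^2=\sigma_i(B)^2=\sigma_i(Z)$ at balanced points. The only difference is cosmetic: the paper obtains $\sigma_r(Z)\ge\sigma_r(A)\sigma_r(B)$ by writing $A=U\Sigma^{1/2}P$, $B=P^{-1}\Sigma^{1/2}V^\top$ with $P$ invertible and applying a product inequality to $r\times r$ matrices, whereas you use Courant--Fischer on the row space of $B$, and both arguments are valid.
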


Combining Proposition \ref{prop:balanced_conditioning_cropped} with Proposition \ref{lem:convergence_rate_and_conditioning} yields a closed-form connection between the best asymptotic convergence rate of the LoRA iterations
and the spectrum of the target matrix $Z$. In particular, it shows that a target with a more spread-out spectrum corresponds to a more challenging matrix factorization problem. Furthermore, balanced minimizers achieve optimal conditioning, making them ideal limiting points for fast asymptotic convergence. These quantitative links between spectral structure, conditioning, and convergence rates offer novel insights into the matrix factorization problem.

\paragraph{The general case.}
We now investigate how our insights for matrix factorization extend to the general case $\mathrm{rk},Z \ge r$. This scenario, where the adapters have a lower rank than the target, reflects the typical case of low-rank adaptation. Here, the residual $AB - Z \neq 0$ introduces additional off-diagonal terms in the Hessian. 

\begin{proposition}
\label{prop:full_hessian}
    Let $(A, B)\in \R^{a\times r}\times \R^{r\times b}$ be a global minimizer of the loss $f$ (\ref{eq:toy_loss}). Assume $\mathrm{rk}\,Z \ge r$.
    The Hessian of $f$ at $(A,B)$ reads
    \begin{multline*}
        H = \begin{pmatrix}
        (BB^\top) \otimes I_a & B\otimes A \\B^\top \otimes A^\top  & I_b\otimes (A^\top A)
    \end{pmatrix} +\\ \begin{pmatrix}
        0 & (I_r\otimes (AB - Z))K_{r,b} \\((AB - Z)^\top \otimes I_r)K_{a,r}  & 0
    \end{pmatrix}
    \end{multline*}
    where $K_{k,\ell}$ is the $k\ell\times k\ell$ matrix such that $\mathrm{vec}(X^\top) = K_{k,\ell} \mathrm{vec}(X)$ for any $X\in \R^{k\times \ell}$, with $\mathrm{vec}$ the vectorization operator.
\end{proposition}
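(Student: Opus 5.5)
The approach is a direct second-order expansion of $f$, with the result read off by identifying the quadratic form with a block matrix in the vectorized variables $(\mathrm{vec}(A),\mathrm{vec}(B))\in\R^{ar}\times\R^{rb}$.

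\textbf{Expansion.} Fix perturbations $(E,F)$ and write $R\coloneqq AB-Z$ for the residual. Then
\[
(A+E)(B+F)-Z = R + (EB+AF) + EF,
\]
so
\[
f(A+E,B+F) = \tfrac12\lVert R\rVert_F^2 + \langle R, EB+AF\rangle + \tfrac12\lVert EB+AF\rVert_F^2 + \langle R, EF\rangle + o(\lVert(E,F)\rVert^2).
\]
The Hessian quadratic form is therefore
\[
Q(E,F) = \lVert EB+AF\rVert_F^2 + 2\langle R,EF\rangle.
\]
Note that $H$ is defined as the Hessian of $f$ at any point. The minimizer assumption only ensures that the linear term vanishes; it plays no role in the formula itself.

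\textbf{First block (Gauss--Newton part).} Using $\mathrm{vec}(XYZ)=(Z^\top\otimes X)\mathrm{vec}(Y)$, we get
\[
\mathrm{vec}(EB) = (B^\top\otimes I_a)\mathrm{vec}(E), \qquad \mathrm{vec}(AF) = (I_b\otimes A)\mathrm{vec}(F).
\]
So $\lVert EB+AF\rVert_F^2 = \lVert J v\rVert^2$, where $J=[\,B^\top\otimes I_a,\ I_b\otimes A\,]$ and $v=(\mathrm{vec}E,\mathrm{vec}F)$. Expanding $J^\top J$ with the mixed-product rule gives the blocks $BB^\top\otimes I_a$, $B\otimes A$, $B^\top\otimes A^\top$ and $I_b\otimes A^\top A$. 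This reproduces the first matrix, which is exactly the Hessian of Proposition \ref{prop:cropped_hessian_spectrum}, so this step can be taken from that proof.

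\textbf{Second block (residual part).} The remaining term $2\langle R,EF\rangle$ must be written as $\mathrm{vec}(E)^\top M\,\mathrm{vec}(F)$, which yields off-diagonal blocks $M$ and $M^\top$. We have
\[
\langle R,EF\rangle = \mathrm{tr}(R^\top EF) = \langle E, RF^\top\rangle,
\]
so
\[
\langle R,EF\rangle = \mathrm{vec}(E)^\top\mathrm{vec}(RF^\top) = \mathrm{vec}(E)^\top (I_r\otimes R)\,\mathrm{vec}(F^\top) = \mathrm{vec}(E)^\top (I_r\otimes R)K_{r,b}\,\mathrm{vec}(F),
\]
using $\mathrm{vec}(F^\top)=K_{r,b}\mathrm{vec}(F)$ for $F\in\R^{r\times b}$. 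This gives the upper-right block.

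For the lower-left block I will write the same scalar as $\mathrm{vec}(F)^\top(\cdot)\,\mathrm{vec}(E)$. We have $\langle R,EF\rangle=\langle F, E^\top R\rangle$, and
\[
\mathrm{vec}(E^\top R) = (R^\top\otimes I_r)\mathrm{vec}(E^\top) = (R^\top\otimes I_r)K_{a,r}\mathrm{vec}(E).
\]
This matches the stated block $((AB-Z)^\top\otimes I_r)K_{a,r}$. The two representations of the same bilinear form agree, so the matrix is automatically the transpose of the upper block and $H$ is symmetric. As a sanity check, $K_{r,b}^\top = K_{b,r}$ together with the commutation identity $K(X\otimes Y)K = Y\otimes X$ gives the symmetry directly.

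\textbf{Main obstacle.} The only delicate step is the index bookkeeping for the commutation matrices: getting the correct subscripts on $K_{k,\ell}$ and the side on which they act. I would verify this by checking dimensions, $(I_r\otimes R)$ being $ar\times br$ and $K_{r,b}$ being $rb\times rb$, and by testing on rank-one $E=e_ie_k^\top$, $F=e_ke_j^\top$, for which both sides give $R_{ij}$.
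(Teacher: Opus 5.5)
Your computation is correct and is the direct second-order expansion the paper intends: the paper only calls the Hessian ``straightforward'', writes the Gauss--Newton block as $MM^\top$ with $M=J^\top$ (as in the proof of Proposition~\ref{prop:cropped_hessian_spectrum}), and checks that the two residual blocks are mutual transposes via $K_{r,b}(x\otimes y)=y\otimes x$, all of which your argument reproduces. One small wording slip: the quadratic form contributes $2\,\mathrm{vec}(E)^\top M\,\mathrm{vec}(F)$, so it is $\langle R,EF\rangle$, not $2\langle R,EF\rangle$, that should equal $\mathrm{vec}(E)^\top M\,\mathrm{vec}(F)$ for the off-diagonal block $M$; your subsequent calculation in fact uses $\langle R,EF\rangle$, so the final blocks are correct.
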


One can verify that $H$ is symmetric, since $(I_r \otimes (AB - Z)) K_{r,b}(x\otimes y) = (I_r \otimes (AB - Z)) (y\otimes x) = y\otimes (AB - Z)x = K_{a,r}^\top ((AB - Z)x \otimes y)$, as $K_{a,r}^\top = K_{r,a}$. The second term in $H$, that has positive and negative eigenvalues, makes the characterization of the conditioning of $H$ more challenging, especially to lower bound $\lambda_{\text{min}\neq 0}$.
Below, we compute the sharpness of the Hessian at a minimizer and provide two bounds on its smallest eigenvalue. The proof is detailed in \Cref{proof:full_hessian_spectrum}.

\begin{proposition}
\label{prop:full_hessian_spectrum}
    Let $(A, B)\in \R^{a\times r}\times \R^{r\times b}$ be a global minimizer of the loss $f$ (\ref{eq:toy_loss}).
    The largest eigenvalue of the Hessian of $f$ at $(A, B)$ is $\lambda_\mathrm{max}(H) = \sigma_1(A)^2 + \sigma_1(B)^2$.
    Moreover, the smallest non-zero eigenvalue of $H$ satisfies,
    \begin{align}\label{eq:encadrement_plus_petite_vp} 
        \lambda_{\mathrm{min}\neq 0}(H) &\ge \min(\sigma_r(A)^2, \sigma_r(B)^2) - \sigma_{r+1}(Z),  \\ \lambda_{\mathrm{min}\neq 0}(H)&\le \min(\sigma_r(A)^2, \sigma_r(B)^2)\,.
    \end{align}
    Finally, if $(A, B)$ is balanced, the lower bound (\ref{eq:encadrement_plus_petite_vp}) is \emph{maximized}, equal to $\sigma_r(Z) - \sigma_{r+1}(Z)$, and becomes an \emph{equality}: $\lambda_{\mathrm{min}\neq 0}(H) = \sigma_r(Z) - \sigma_{r+1}(Z)$. %
\end{proposition}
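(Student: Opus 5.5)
We work with the Hessian as a quadratic form. For a direction $(U,V)\in\R^{a\times r}\times\R^{r\times b}$, differentiating $f$ twice gives
$$Q(U,V)\coloneqq\langle (U,V),H(U,V)\rangle=\lVert UB+AV\rVert_F^2+2\langle AB-Z,\,UV\rangle,$$
which matches the two terms of Proposition~\ref{prop:full_hessian}. We assume $a,b>r$ and $\mathrm{rk}\,AB=r$.

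\textbf{Step 1: use Eckart--Young.} Write $Z=\sum_k\sigma_k(Z)u_kv_k^\top$. At a global minimizer, $AB$ is a best rank-$r$ approximation of $Z$, so we may take $AB=\sum_{k\le r}\sigma_k(Z)u_kv_k^\top$. The residual is then $E\coloneqq AB-Z=-\sum_{k>r}\sigma_k(Z)u_kv_k^\top$. Its column space is orthogonal to $\mathrm{col}(A)$ and its row space is orthogonal to $\mathrm{row}(B)$, so $A^\top E=0$ and $EB^\top=0$.

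\textbf{Step 2: block-decompose $Q$.} Split $U=U_\parallel+U_\perp$ according to $\mathrm{col}(A)\oplus\mathrm{col}(A)^\perp$, and $V=V_\parallel+V_\perp$ according to $\mathrm{row}(B)\oplus\mathrm{row}(B)^\perp$. Checking column and row spaces shows that the cross terms in $\lVert UB+AV\rVert_F^2$ involving $U_\perp B$ or $AV_\perp$ vanish. Also $\langle E,UV\rangle=\langle E,U_\perp V_\perp\rangle$. Hence
$$Q=Q_0(U_\parallel,V_\parallel)+\Big(\lVert U_\perp B\rVert_F^2+\lVert AV_\perp\rVert_F^2+2\langle E,U_\perp V_\perp\rangle\Big),$$
where $Q_0$ is the matrix-factorization form of Proposition~\ref{prop:cropped_hessian_spectrum}. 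So $H$ splits into a ``$\parallel$'' block and a ``$\perp$'' block.

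The $\parallel$ block is the cropped Hessian restricted to an invariant subspace. Its spectrum is $0$ (the $r^2$ gauge directions $(AR,-RB)$) together with $\sigma_i(A)^2+\sigma_j(B)^2$. This block already attains $\sigma_1(A)^2+\sigma_1(B)^2$.

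For the $\perp$ block, set $u=\lVert U_\perp\rVert_F$ and $v=\lVert V_\perp\rVert_F$. Using $\sigma_{r+1}(Z)\le\sigma_1(Z)\le\sigma_1(A)\sigma_1(B)$, we get
$$Q_\perp\le \sigma_1(B)^2u^2+\sigma_1(A)^2v^2+2\sigma_1(A)\sigma_1(B)uv.$$
The $2\times2$ matrix $\left(\begin{smallmatrix}\sigma_1(B)^2&\sigma_1(A)\sigma_1(B)\\ \sigma_1(A)\sigma_1(B)&\sigma_1(A)^2\end{smallmatrix}\right)$ has top eigenvalue $\sigma_1(A)^2+\sigma_1(B)^2$. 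This gives $\lambda_{\max}(H)=\sigma_1(A)^2+\sigma_1(B)^2$.

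\textbf{Step 3: the bounds on $\lambda_{\min\neq0}(H)$.} Let $m\coloneqq\min(\sigma_r(A)^2,\sigma_r(B)^2)$.

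For the lower bound, full rank of $A$ and $B$ gives $\lVert U_\perp B\rVert_F^2\ge\sigma_r(B)^2u^2$ and $\lVert AV_\perp\rVert_F^2\ge \sigma_r(A)^2v^2$. Together with $|\langle E,U_\perp V_\perp\rangle|\le\sigma_{r+1}(Z)uv$, this yields $Q_\perp\ge (m-\sigma_{r+1}(Z))(u^2+v^2)$. Every nonzero eigenvalue of the $\parallel$ block is at least $2m\ge m$. Since the kernel of $H$ is exactly the gauge directions inside the $\parallel$ block, the lower bound \eqref{eq:encadrement_plus_petite_vp} follows.

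For the upper bound, take the test direction $(xp^\top,0)$, where $x\perp\mathrm{col}(A)$ and $p$ is the left singular vector of $B$ for $\sigma_r(B)$. This direction is orthogonal to every gauge direction $(AR,-RB)$, because its columns lie outside $\mathrm{col}(A)$. Its Rayleigh quotient is $\sigma_r(B)^2$, and the Courant--Fischer principle on $\ker(H)^\perp$ then gives $\lambda_{\min\neq0}(H)\le\sigma_r(B)^2$. The symmetric test direction $(0,q\,y^\top)$, with $q$ the right singular vector of $A$ for $\sigma_r(A)$ and $y\perp\mathrm{row}(B)$, gives $\lambda_{\min\neq0}(H)\le\sigma_r(A)^2$.

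\textbf{Step 4: balanced case (expected main obstacle).} Maximality is short. Since $A$ and $B$ have full rank $r$, $\sigma_r(Z)=\sigma_r(AB)\ge\sigma_r(A)\sigma_r(B)\ge m$. The balanced factorization $A=P\Sigma^{1/2}O$, $B=O^\top\Sigma^{1/2}Q^\top$ has $\sigma_r(A)^2=\sigma_r(B)^2=\sigma_r(Z)$, so it maximizes the lower bound at $\sigma_r(Z)-\sigma_{r+1}(Z)$.

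Equality needs an explicit eigenvector of the $\perp$ block, and this is where the balance condition must be used. Let $p_r$ be the shared singular vector, so that $Bp_r$-type and $A$-type directions align. Take the pair
$$U_\perp=u_{r+1}p_r^\top,\qquad V_\perp=\pm\,p_r v_{r+1}^\top,$$
with $p_r$ chosen so that $A p_r\propto u_r$ and $p_r^\top B\propto v_r^\top$. On this two-dimensional invariant subspace the form reduces to $\left(\begin{smallmatrix}\sigma_r(Z)&-\sigma_{r+1}(Z)\\-\sigma_{r+1}(Z)&\sigma_r(Z)\end{smallmatrix}\right)$, with eigenvalue $\sigma_r(Z)-\sigma_{r+1}(Z)$.

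The delicate points are two. First, invariance must be checked: we need $H$ to map these directions back into their span, which uses $A^\top A=BB^\top$ so that the singular bases of $A$ and $B$ coincide. Second, this eigenvalue must be nonzero and orthogonal to the gauge kernel. If $\sigma_r(Z)=\sigma_{r+1}(Z)$ the claimed value is $0$, and that degenerate case would need separate care.
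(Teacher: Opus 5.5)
Your route differs from the paper's, and it is sound apart from one step that needs repair.

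\textbf{Comparison with the paper.} The paper splits $H=H_1+H_2$ additively, into the matrix-factorization Hessian plus the residual term. It reads off the spectrum $\pm\sigma_k(Z)$, $k>r$, of $H_2$, and gets the lower bound from Weyl's inequality $\lambda_k(H_1+H_2)\ge\lambda_k(H_1)+\lambda_{\min}(H_2)$. You instead use $A^\top E=0$ and $EB^\top=0$ to split the parameter space orthogonally into two $H$-invariant blocks. The residual then couples only the $\perp$ variables, and every estimate becomes a two-variable quadratic-form bound.

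This buys rigor where the paper is thin:
\begin{itemize}
\item Your $\lambda_{\max}$ argument is a genuine operator-norm bound. The paper only bounds $\lvert Hu\rvert$ on an eigenbasis of $H_1$, which does not by itself control $\lVert H\rVert_2$.
\item Your direction $(xp^\top,0)$ is an actual witness for $\lambda_{\mathrm{min}\neq 0}(H)\le m$. The shared eigenvectors the paper invokes only carry eigenvalues $\sigma_i(A)^2+\sigma_j(B)^2\ge 2m$.
\end{itemize}
The Weyl route is shorter for the lower bound.

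In the balanced case both proofs use the same eigenvector $(u_{r+1}o_r^\top,\,o_rv_{r+1}^\top)$. The invariance you flag is a direct check. Start from $H(U,V)=\bigl((UB+AV)B^\top+EV^\top,\;A^\top(UB+AV)+U^\top E\bigr)$ and use $A^\top A o_r=BB^\top o_r=\sigma_r(Z)o_r$, $A^\top u_{r+1}=0$, $Bv_{r+1}=0$, $Ev_{r+1}=-\sigma_{r+1}(Z)u_{r+1}$ and $u_{r+1}^\top E=-\sigma_{r+1}(Z)v_{r+1}^\top$. The pair is mapped to $(\sigma_r(Z)-\sigma_{r+1}(Z))$ times itself.

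\textbf{The step to repair.} It is the assertion that $\ker H$ consists exactly of the gauge directions. The lower bound does not need it, but your Courant--Fischer step does, since the test vector must lie in $\ker(H)^\perp$. Your estimate $Q_\perp\ge(m-\sigma_{r+1}(Z))(u^2+v^2)$ shows the $\perp$ block is definite only if $m>\sigma_{r+1}(Z)$. Yet $m$ can be made arbitrarily small along the gauge orbit $(cA,B/c)$.

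Under $\sigma_r(Z)>\sigma_{r+1}(Z)$ this is easily fixed. Set $X=U_\perp(BB^\top)^{1/2}$, $Y=(A^\top A)^{1/2}V_\perp$ and $K=[(A^\top A)^{1/2}(BB^\top)^{1/2}]^{-1}$. Then $U_\perp V_\perp=XKY$, $\lVert X\rVert_F=\lVert U_\perp B\rVert_F$, $\lVert Y\rVert_F=\lVert AV_\perp\rVert_F$, and $\lVert K\rVert_2=1/\sigma_r(Z)$, because the inverted matrix has the singular values of $AB$. Hence $Q_\perp\ge(1-\sigma_{r+1}(Z)/\sigma_r(Z))(\lVert X\rVert_F^2+\lVert Y\rVert_F^2)>0$. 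The smallest eigenvalue of the $\perp$ block is then a nonzero eigenvalue of $H$, bounded above by your Rayleigh quotients.

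\textbf{The hypothesis is genuinely needed.} It is required in Step~3, not only in Step~4, and in the degenerate case the statement is false rather than merely delicate. Take $r=1$, $Z=I_2$, $A=e_1$, $B=e_1^\top$, which is a balanced global minimizer. Then $Q(U,V)=(u_1+v_1)^2+(u_2-v_2)^2$, so $H$ has spectrum $\{0,0,2,2\}$ and $\lambda_{\mathrm{min}\neq0}(H)=2>1=m$. This contradicts both the upper bound and the claimed equality. Your test vector $(e_2,0)$ has a component along the extra kernel direction $(e_2,e_2^\top)$. The paper's proof makes the same silent assumption when it states that $\ker H$ has dimension $r^2$. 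Like yours, it also implicitly needs $a,b>r$.
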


Compared to matrix factorization, here balanced minimizers are still optimally conditioned, but the key quantity governing the intrinsic hardness of LoRA optimization shifts from $\sigma_r(Z)$ to the $r$-spectral gap $\sigma_r(Z) - \sigma_{r+1}(Z)$. This gap quantifies how well the rank-$r$ approximation separates from the discarded directions. The smaller the gap (and the larger $\sigma_1(Z)$), the slower the asymptotic convergence of the iterations (\ref{eq:lora_iterations}) in the best case.

\subsection{The Deep Non-Linear Case}

The theoretical analysis in the previous section focuses on a simplified toy model \eqref{eq:toy_loss}. While a comprehensive analysis of Hessian conditioning for deeper architectures remains challenging, we can gain insights into why balancing improves conditioning in the case of fine-tuning a single-layer adapter in the interpolating regime (zero minimum loss).

Consider the regression loss
$f(A,B):=\frac{1}{2}\|h(AB)-Z\|^2$, where $Z$ is the target matrix, $\| \cdot \|_F$ is the Frobenius norm and $h(AB)\in \R^{d\times n}$ is the output of a generic, possibly deep and non-linear neural network, with $n$ fixed inputs---$h$ is seen as a function of the LoRA adapter $AB$. For instance, for a 2-layer MLP with weights $(V,W)$ for which we fine-tune only the hidden-layer matrix $W$, one has
$
h(AB):=V\,\mathrm{ReLU} ((W+AB)X ),
$
where $X\in \R^{b\times n}$ is the data matrix.

Assuming $nd\ge ab$ (a condition satisfied when sufficient data is available), the Jacobian $D h(AB)$ is a rectangular and injective matrix. We define its conditioning as
$
\kappa(D h(AB)) := \kappa(D h(AB)^\top\,D h(AB))^{1/2}.
$
The following Proposition, proved in Appendix \ref{appsubsec:proof_deep_case}, provides a bound on the conditioning of the loss at a minimizer in the interpolation regime. It generalizes Proposition~\ref{prop:cropped_hessian_spectrum}, which is recovered as a special case when $h=\mathrm{Id}$.

\begin{proposition}
\label{prop:deeper_case}
Let $(A, B)$ be a minimizer of the loss $f(A,B):=\frac{1}{2}\|h(AB)-Z\|^2$, such that $h(AB)=Z$. One has the following upper-bound on the conditioning of $f$ at $(A, B)$:
\[
\kappa(f)(A,B) \le 
\kappa(D h(AB))^2\,
\frac{\sigma_1(A)^2+\sigma_1(B)^2}{\min(\sigma_r(A)^2,\sigma_r(B)^2)}.
\]
Moreover, taking $(A, B)$ to be balanced minimizes the upper-bound.
\end{proposition}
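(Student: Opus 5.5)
At an interpolating minimizer the residual $h(AB)-Z$ vanishes, so the second-order term of the Hessian disappears. We are left with a Gauss--Newton form. Write $\Phi(A,B)\coloneqq AB$ and $J\coloneqq D h(AB)\in\R^{nd\times ab}$. Let $M\coloneqq D\Phi(A,B)$ be the linear map $(\dot A,\dot B)\mapsto \dot A B + A\dot B$. Differentiating $f$ twice and evaluating at $h(AB)=Z$ gives
\[
H = M^\top J^\top J M .
\]
The key observation is that $M^\top M$ is exactly the Hessian of the one-layer loss \eqref{eq:toy_loss} at a minimizer with $\mathrm{rk}\,Z=r$, i.e.\ the Kronecker block matrix of Proposition~\ref{prop:cropped_hessian_spectrum}. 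That Hessian is $D\Phi^\top D\Phi$ when the residual is zero. Its spectrum is therefore known: the largest eigenvalue is $\sigma_1(A)^2+\sigma_1(B)^2$, the smallest nonzero one is $\min(\sigma_r(A)^2,\sigma_r(B)^2)$, and the kernel is $\ker M$ of dimension $r^2$. This holds provided $A,B$ have full rank $r$. I assume this is the relevant situation; otherwise the bound is trivial or infinite.

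Next I compare the spectra of $M^\top J^\top J M$ and $M^\top M$. Let $\lambda_{\max}(J^\top J)=\sigma_1(J)^2$ and $\lambda_{\min}(J^\top J)=\sigma_{\min}(J)^2>0$. The latter is positive because $J$ is injective, which uses $nd\ge ab$. For any $u$ we have
\[
\sigma_{\min}(J)^2\,\|Mu\|^2\le \langle Hu,u\rangle=\|JMu\|^2\le \sigma_1(J)^2\,\|Mu\|^2 .
\]
Hence $\ker H=\ker M$ by injectivity of $J$, and $\lambda_{\max}(H)\le \sigma_1(J)^2\lambda_{\max}(M^\top M)$. For the smallest nonzero eigenvalue I use the Rayleigh quotient restricted to $(\ker H)^\perp=(\ker M)^\perp$, which gives $\lambda_{\min\neq0}(H)\ge \sigma_{\min}(J)^2\lambda_{\min\neq0}(M^\top M)$. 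This step needs care because the kernels must coincide for the restriction to be legitimate; that is where injectivity is essential. Dividing the two bounds gives
\[
\kappa(f)(A,B)\le \frac{\sigma_1(J)^2}{\sigma_{\min}(J)^2}\cdot\frac{\sigma_1(A)^2+\sigma_1(B)^2}{\min(\sigma_r(A)^2,\sigma_r(B)^2)} .
\]
The first factor is $\kappa(J^\top J)=\kappa(Dh(AB))^2$ by the definition given before the statement, and the second factor comes from Proposition~\ref{prop:cropped_hessian_spectrum}.

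Finally I show that balancing minimizes the bound. The factor $\kappa(Dh(AB))^2$ depends only on the product $AB$, which is the same for all $(AR,R^{-1}B)$ on the fibre. So it suffices to minimize the second factor over factorizations of the fixed matrix $P\coloneqq AB$, which has rank $r$. This is exactly the content of Proposition~\ref{prop:balanced_conditioning_cropped} applied with $Z$ replaced by $P$, since that proposition only concerns factorizations of a rank-$r$ matrix. For completeness, the underlying argument is as follows. We have $\sigma_1(P)\le\sigma_1(A)\sigma_1(B)\le(\sigma_1(A)^2+\sigma_1(B)^2)/2$, and $\sigma_r(P)\ge\sigma_r(A)\sigma_r(B)\ge\min(\sigma_r(A)^2,\sigma_r(B)^2)$. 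Hence the ratio is at least $2\sigma_1(P)/\sigma_r(P)$, with equality when $A^\top A=BB^\top$, because then $\sigma_i(A)^2=\sigma_i(B)^2=\sigma_i(P)$. The main obstacle is the kernel identification in the second step, i.e.\ ensuring that the nonzero spectrum of $H$ is controlled by that of $M^\top M$. Injectivity of $J$ resolves it.
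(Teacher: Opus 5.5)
Your proof is correct and follows the paper's route. The paper uses the Gauss--Newton identity $D^2 f = D\varphi^\top Dh^\top Dh\, D\varphi$ at the interpolation point, bounds the Hessian spectrum on both sides by that of $D\varphi^\top D\varphi$ scaled by the extreme singular values of the injective Jacobian $Dh(AB)$, and invokes Proposition~\ref{prop:cropped_hessian_spectrum}. You also make explicit two points the paper leaves implicit: the kernel identification $\ker H=\ker D\varphi$ that justifies restricting the Rayleigh quotient, and the argument that balancing minimizes the bound (since $\kappa(Dh(AB))$ is constant on the fibre $(AR,R^{-1}B)$, you can apply Proposition~\ref{prop:balanced_conditioning_cropped} with $Z$ replaced by $AB$).
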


Proposition \ref{prop:deeper_case} shows that balancing the singular values of $(A,B)$ minimizes the upper-bound on the conditioning at an interpolating point, which suggests an improved conditioning of the loss. Extending this analysis to the non-interpolating regime is an avenue for future work, and would bridge the gap with Proposition~\ref{prop:full_hessian_spectrum}, which does not require interpolation but assumes $h=\mathrm{Id}$.

The results in this section suggest that steering the dynamics of (\ref{eq:lora_iterations}) toward balanced adapters—whether through explicit or implicit regularization—can accelerate training in practice. Building on this insight, we propose a fine-tuning strategy that guides LoRA along balanced adapters, leading to faster convergence and improved stability in practice.

\section{BaLoRA: Balanced Low-Rank Adaptation}
\label{sec:BaLoRA}

Our results in \Cref{sec:theory} imply that balanced minimizers $(A,B) \in \Bal$ attain the optimal condition number, where $\Bal = \{(A,B)\mid A^\top A = BB^\top\}$ is the \emph{balanced manifold} \citep{DuHuLee2018}. 
Building on this new insight, we thus propose to constrain LoRA iterations to stay on $\Bal$ by projecting the iterates after each optimizer (e.g., gradient or AdamW) step, to promote convergence to a better-conditioned minimizer with faster asymptotic rates (Figure \ref{fig:balora_vs_lora_trajectories}).

\begin{figure}
    \centering
    \includegraphics[width=0.5\linewidth]{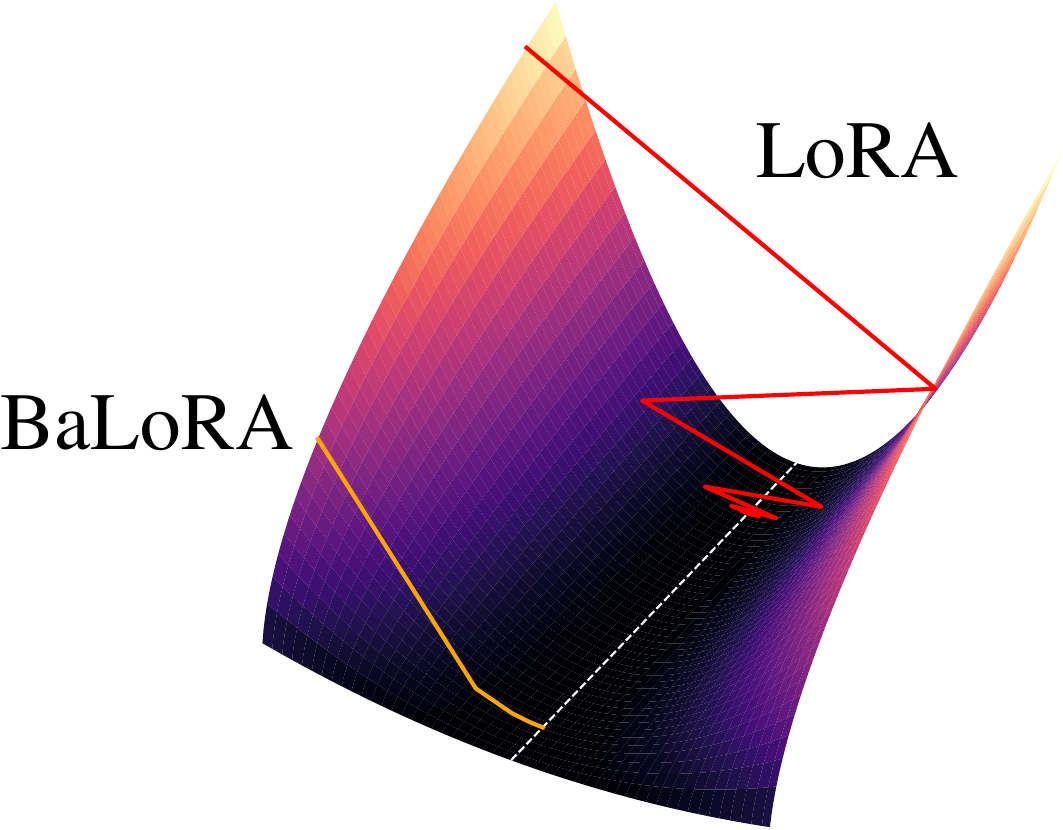}
    \caption{\textbf{The intuition behind BaLoRA.} By constraining the adapters to be balanced along the fine-tuning iterations, BaLoRA converges to balanced, and therefore, optimally conditioned, minimizer, reaching faster asymptotic convergence rates.}
    \label{fig:balora_vs_lora_trajectories}
\end{figure}

As exposed in the following subsection, we introduce a submanifold $\Hyp \subset \Bal$ of \emph{hyperbalanced} matrices, which provides a more structured and efficient parameterization. 
We call Balanced Low-Rank Adaptation (BaLoRA) the novel fine-tuning method obtained by projecting to $\Hyp$.
Note that in the remainder, we use the term ``manifold'' with a slight abuse of language, since the sets we consider are manifolds with boundary.

\subsection{Hyperbalanced Manifold and Balancing Map}

\begin{algorithm}[tb]
\caption{Balanced projection}
\begin{algorithmic}[1]
\label{algo:BaLoRA_projection}
\REQUIRE $(A, B)\in \R^{a\times r}\times \R^{r\times b}$
\STATE Compute polar decompositions $A = R_A S_A$ and $B = S_B R_B$
\STATE Compute $ S = S_A S_B \in \R^{r\times r}$
\STATE Compute SVD decomposition $S = U \Sigma V^\top$
\STATE \textbf{Return} $A^\mathrm{proj}, B^\mathrm{proj} = R_A(U\Sigma^{1/2}), (\Sigma^{1/2}V^\top)R_B$
\end{algorithmic}
\end{algorithm}

Let $\mathbb{D}^r_{+}$ denote the set of $r\times r$ non-negative diagonal matrices with non-increasing diagonal values.
We consider the submanifold (with boundary) $\Hyp \subset \Bal$, which we call the \emph{hyperbalanced manifold}:
$
    \Hyp  =  \{(A,B)\in\mathbb{R}^{a\times r}\times\mathbb{R}^{r\times b}\,\mid \, \exists\, S\in \mathbb{D}^r_{+}\ \text{ s.t. } A^\top A= BB^\top=S\} \,.
$
As detailed in \Cref{prop:equivM}, one has the equivalent description of $\Hyp$,
\begin{equation}\label{eq:equiv-descr-hyper}
\Hyp = \{ \bigl(U S^{1/2},\, S^{1/2}V\bigr)\mid U^\top U = V V^\top = I_r,\ S\in \mathbb{D}^r_{+} \}.
\end{equation}
This reformulation has two main consequences. 

\noindent\textbf{Consequence 1: optimizing on $\Hyp$ is equivalent to optimizing over low-rank matrices.} Denoting $\mathcal{N}_r$ the set of rank-$r$ matrices, (\ref{eq:equiv-descr-hyper}) shows that $(A,B) \in \Hyp \mapsto X:=AB \in \mathcal{N}_r$ is surjective. 
For a function $g(X)$, define $f(A,B) := g(AB)$. Then, $\min_{X \in \mathcal{N}_r} g(X)$ and $\min_{(A,B) \in \Hyp} f(A,B)$ are equivalent problems. Working with variables $(A,B) \in \Hyp$ therefore provides a computationally convenient framework for low-rank optimization, while also taking advantage of the improved conditioning discussed in the previous section.

\noindent\textbf{Consequence 2: definition of the balancing map $P$  ``projecting'' onto $\Hyp$.}
Given $(A,B)$ with $X=AB$, take any reduced SVD with decreasing singular values $X = U\,S\,V^\top$. The \emph{balancing map} is defined as
\begin{equation}\label{eq:BaLoRA-retraction}
	P(A,B)  :=  \bigl(U S^{1/2},\; S^{1/2}V^\top\bigr).
\end{equation}
The reformulation in (\ref{eq:equiv-descr-hyper}) shows that $P$ ``projects'' onto $\Hyp$.
Although this is not an orthogonal projector, \Cref{prop:Pprops} in the appendix shows that it exhibits a ``projection-like'' behavior, namely, it defines a smooth retraction~\citep{AbsilMalick2012} onto $\Hyp$. Consequently, results from Riemannian optimization~\citep{Boumal2023} can be applied to analyze the convergence of the BaLoRA-GD method, \textit{i.e.}, gradient descent combined with $P$ to keep the iterates on $\Hyp$, as detailed in the next section.
Another important property of the balancing map $P$ is that it preserves the product, unlike the orthogonal projector: denoting $(\tilde A,\tilde B):=P(A,B)$, then $\tilde A \tilde B = AB$. This preservation guarantees that the loss remains \emph{unchanged}, which is essential for the intrinsic reformulation described in Section~\ref{sec:BaLoRA-gd}.
The procedure to efficiently compute $P$ is given in \Cref{algo:BaLoRA_projection}, and has computational complexity $\mathcal{O}((a + b)r^2)$ for $a, b>\!\!> r$, which adds negligible overhead to the cost of the optimizer step (see Section \ref{sec:experiments}).

\subsection{BaLoRA and BaLoRA-GD}
\label{sec:BaLoRA-gd}

\begin{center}
\fbox{\parbox{0.9\linewidth}{\textbf{Definition.} 
The \emph{BaLoRA} method consists in applying $P(A,B)  =  A^\mathrm{proj}, B^\mathrm{proj}$ at the end of each step of an optimization scheme, as defined in Algorithms \ref{algo:BaLoRA_projection} and \ref{algo:BaLoRA_training}. 
When combined with Adam, we simply refer to the resulting algorithm as \emph{BaLoRA}. 
When applied to the iterates of gradient descent, we call it \emph{BaLoRA-GD}.}}
\end{center}

\begin{algorithm}[tb]
\caption{BaLoRA training}
\begin{algorithmic}[1]
\label{algo:BaLoRA_training}
\REQUIRE Pretrained weights, to be finetuned: $W_\ell \in \R^{a\times b}$ for $1\le \ell\le L$; rank $r$; learning rate $\eta$
\STATE \textbf{Initialize:} $A_\ell^0 = 0 \in \R^{a\times r}$ and $B_\ell^0 \in \R^{r\times b}$ using Kaiming initialization, for $1\le \ell\le L$
\FOR{$t = 0, 1, \dots, T-1$}
    \STATE Sample a mini-batch of data
    \STATE Compute corresponding gradients $\nabla_{A_\ell^t} f$ and $\nabla_{B_\ell^t} f$ for the LoRA loss $f(A_1^t, B_1^t, \dots, A_L^t, B_L^t)$
    \STATE \textbf{Optimizer Step:} Update low-rank factors
    \STATE \quad $(\tilde{A}_\ell^{t+1}, \tilde{B}_\ell^{t+1}) \leftarrow \text{Step}(A_\ell^t, B_\ell^t, \nabla_{A_\ell^t} f, \nabla_{B_\ell^t} f, \eta)$
    \STATE \textbf{Balanced Projection:}
    \STATE \quad $(A_\ell^{t+1}, B_\ell^{t+1}) \leftarrow P(\tilde A_\ell^{t+1}, \tilde B_\ell^{t+1})$ 
\ENDFOR
\STATE \textbf{Return} Final adapters $(A_1^T, B_1^T),\dots, (A_L^T, B_L^T)$
\end{algorithmic}
\end{algorithm} 

While BaLoRA (with Adam) is a heuristic method whose full rigorous theoretical analysis is beyond the scope of this work, we show that the gradient descent variant, BaLoRA-GD, exhibits a striking intrinsic behavior. Recall that BaLoRA-GD iterates for $k \geq 0$ and some stepsize $\tau_k>0$, starting from any initialization $(A_0, B_0)$, read
$(A_{k+1},B_{k+1}) = P\big( A_k - \tau_k\,\nabla_A f(A_k,B_k) , B_k - \tau_k\,\nabla_B f(A_k,B_k) \big)$. %

\textbf{Intrinsic BaLoRA-GD.}
Consider a loss function $f(A,B)=g(X)$, where $X=AB$ and $g\colon\mathbb{R}^{a\times b}\to\mathbb{R}$ is smooth. This general setting encompasses all LoRA losses. As shown in Proposition~\ref{prop:Zupdate} below, proved in Appendix \ref{appsubsec:Zupdate_proof}, the BaLoRA-GD iteration can be written entirely as an intrinsic gradient descent on the manifold of rank-$r$ matrices $\mathcal{N}_r$ endowed with a Riemannian metric.
For $X \in \mathcal{N}_r$, the inverse of this metric is given by the symmetric positive (semi)definite linear operator
$
    H_X[W] \;:=\; (X X^\top)^{1/2} W \;+\; W (X^\top X)^{1/2}.
$
When restricted to symmetric positive definite (SPD) matrices, this operator reduces to $H_X[W]=XW+WX$ and coincides with the inverse of the Bures metric. This metric is well known in optimal transport on Gaussian measures and provides a natural tool for optimization over the cone of SPD matrices \citep{BhatiaJainLim2019}. By abuse of terminology, we will refer to $H_X$ as the inverse Bures metric on the larger manifold $\mathcal{N}_r$.

\begin{proposition}[Intrinsic update on $X=AB$]\label{prop:Zupdate}
Let $f(A,B)=g(AB)$. Denote by $(A_k,B_k)$ the BaLoRA-GD iterates and set $X_k:=A_kB_k$. Then for $k\ge 1$,
\begin{align}\label{eq:Zclosed}
    X_{k+1} &= R\bigl(X_k,\,-\tau_k \Delta_k\bigr), \\
    \text{where \;} R(X,\delta) &= X + \delta - H_X^{-1}[\delta]\, X^\top\, H_X^{-1}[\delta],
\end{align}
and $\Delta_k := H_{X_k}[\nabla g(X_k)]$ is the Riemannian gradient associated with the Bures metric, $R$ is a retraction on $\mathcal{N}_r$.
\end{proposition}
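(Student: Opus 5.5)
The plan is to compute the Euclidean gradient step in the factors explicitly and then use that, for $k\ge 1$, the iterate $(A_k,B_k)$ lies on $\Hyp$. Since $P$ preserves the product, $X_{k+1}$ is simply the product of the pre-projection factors, so no further analysis of $P$ is needed.

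\emph{Step 1: hyperbalanced factors.} Fix $k\ge 1$. Since $(A_k,B_k)$ is the output of $P$, by \eqref{eq:BaLoRA-retraction} we can write $A_k=US^{1/2}$ and $B_k=S^{1/2}V^\top$, where $X_k=USV^\top$ is a reduced SVD. This step is where $k\ge1$ is used. Consequently
\[
A_kA_k^\top = USU^\top=(X_kX_k^\top)^{1/2}, \qquad B_k^\top B_k=VSV^\top=(X_k^\top X_k)^{1/2}.
\]

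\emph{Step 2: expand the product.} Write $G:=\nabla g(X_k)$. By the chain rule, $\nabla_A f=GB_k^\top$ and $\nabla_B f=A_k^\top G$. The product of the updated factors is
\[
(A_k-\tau_k GB_k^\top)(B_k-\tau_k A_k^\top G)= X_k-\tau_k\bigl(GB_k^\top B_k+A_kA_k^\top G\bigr)+\tau_k^2\, G B_k^\top A_k^\top G .
\]
By Step 1, the linear term equals $-\tau_k H_{X_k}[G]=-\tau_k\Delta_k$. Since $B_k^\top A_k^\top=X_k^\top$, the quadratic term equals $\tau_k^2 G X_k^\top G$. Because $P$ preserves the product, $X_{k+1}$ equals this expression. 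Setting $\delta=-\tau_k\Delta_k$, the element $-\tau_k G$ is a preimage of $\delta$ under $H_{X_k}$, so the quadratic term has the form $H_{X_k}^{-1}[\delta]\,X_k^\top\,H_{X_k}^{-1}[\delta]$. I would carefully check the sign convention of this term against the stated $R$ at this point.

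\emph{Step 3: well-definedness of $H_X^{-1}$ (the main obstacle).} $H_X$ is only positive semidefinite on $\R^{a\times b}$, so $H_X^{-1}[\delta]$ must be understood as any preimage, e.g.\ the pseudo-inverse. I need to show that the quadratic term does not depend on which preimage is chosen. Take $U_\perp,V_\perp$ completing $U,V$ (also covering zero singular values in $S$). Then $\ker H_X$ consists of the matrices $K$ with $U^\top K=0$ and $KV=0$ when $S\succ0$; in the degenerate case, it consists of the blocks on which both square roots vanish. For $G'=G+K$ with such a $K$:
\[
KX^\top G = KVSU^\top G=0, \qquad GX^\top K=GVSU^\top K=0, \qquad KX^\top K=0 .
\]
Hence $G'X^\top G'=GX^\top G$, and $R(X,\delta)$ is well defined. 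For zero singular values, I would repeat the same block argument using $S$ in place of $S\succ0$.

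\emph{Step 4: retraction property.} I would check three things.
\begin{itemize}
\item $R(X,0)=X$.
\item The quadratic term is $O(\lVert\delta\rVert^2)$, because the pseudo-inverse is bounded on the range. Hence $\mathrm{D}R(X,0)=\mathrm{Id}$ on the tangent space of $\mathcal N_r$, which lies in the range of $H_X$.
\item $R(X,\delta)$ is a product of an $a\times r$ and an $r\times b$ matrix. So it stays in the set of rank at most $r$ matrices, and in $\mathcal N_r$ for small $\delta$.
\end{itemize}
Smoothness in $(X,\delta)$ follows from the smoothness of $X\mapsto(XX^\top)^{1/2}$ on $\mathcal N_r$, whose nonzero eigenvalues are bounded away from zero locally.
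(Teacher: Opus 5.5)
Your Steps 1--2 are exactly the paper's proof: the chain rule, expanding the product of the updated factors, using that $P$ preserves products, and the identities $A_kA_k^\top=(X_kX_k^\top)^{1/2}$, $B_k^\top B_k=(X_k^\top X_k)^{1/2}$ from hyperbalancedness. The sign check you deferred is not a matter of convention, and it comes out \emph{against} the displayed formula. The element $-\tau_k G$ is a preimage of $\delta=-\tau_k\Delta_k$, and the map $Y\mapsto YX^\top Y$ is even. So what you have proved is
\[
X_{k+1}=R(X_k,-\tau_k\Delta_k),\qquad R(X,\delta)=X+\delta+H_X^{-1}[\delta]\,X^\top H_X^{-1}[\delta],
\]
with a \emph{plus} sign. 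The paper's proof also ends at $X_{k+1}=X_k-\tau_kH_{X_k}[G_k]+\tau_k^2G_kX_k^\top G_k$ and then calls this ``the claimed formula''. The minus sign in the statement is a typo, and the literal statement is false. Take $a=b=r=1$, $X_k=x>0$ and $A_k=B_k=\sqrt{x}$. Then $X_{k+1}=x(1-\tau g)^2=x-2\tau gx+\tau^2g^2x$, whereas the stated $R$ gives $x-2\tau gx-\tau^2g^2x$. You should state and prove the corrected identity explicitly rather than leave the sign to be checked.

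Your Step 4 also depends on this correction. It uses $R(X,\delta)=(A+YB^\top)(B+A^\top Y)$ with $Y=H_X^{-1}[\delta]$ to get rank at most $r$, and this factorization holds only for the plus sign. With the minus sign, $R$ does not even map into $\mathcal N_r$. Take $X=\begin{pmatrix}1&0\\0&0\end{pmatrix}$, $r=1$, and $\delta=\begin{pmatrix}0&t\\t&0\end{pmatrix}$. Then $H_X[\delta]=\delta$ and $\delta X^\top\delta=\mathrm{diag}(0,t^2)$, so $X+\delta-\delta X^\top\delta$ has determinant $-2t^2\neq 0$. The claim that $R$ is a retraction on $\mathcal N_r$ therefore also needs the plus sign.

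Apart from this, your Steps 3--4 add something the paper lacks. The paper only asserts that $R(X,\delta)$ does not depend on the choice of preimage and that $R$ is a retraction. Your kernel computation is correct: $\ker H_X=\{K: U^\top K=0,\ KV=0\}$ when $S\succ 0$, hence $KX^\top=X^\top K=0$. Together with your first-order check, this justifies both properties for the corrected $R$.
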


Note that although $H_X^{-1}[\delta]$ is not uniquely defined when $X$ is rank-deficient, the quantity $R(X,\delta)$ is uniquely defined.
Moreover, \eqref{eq:Zclosed} may fail for $k=0$ if $(A_0,B_0)$ does not belong to the balancing set $\mathcal{H}$.

Equation~\eqref{eq:Zclosed} is the canonical way to express a Riemannian gradient descent on a manifold using a retraction~\cite{Boumal2023}. When $\tau_k \to 0$, this iteration converges to the gradient flow $\dot X = -\,H_X[\nabla g(X)]$.
BaLoRA-GD can therefore be interpreted as an efficient implementation of gradient descent with respect to the Bures metric, leveraging computations on the factored variables $(A,B)$ instead of working directly with $X$.

\textbf{BaLoRA initialization.} Throughout the paper, we initialize BaLoRA in the same way as standard LoRA, with $A = 0$ and $B$ following the Kaiming initialization. However, the BaLoRA method is compatible with a variety of different initializations, such as the OLoRA \cite{buyukakyuz2024olora} or the LoRA-GA \cite{wang2024loragalowrankadaptationgradient} initialization. Investigating whether some of these initializations are particularly suited to BaLoRA is an interesting avenue for future work.

\section{Experiments}
\label{sec:experiments}

We present an empirical evaluation of BaLoRA, demonstrating improved performance and convergence speed with negligible computational overhead. Our experiments cover fine-tuning tasks on synthetic and real-world data across multiple pre-trained architectures, complemented by ablation studies highlighting BaLoRA's robustness to hyperparameter choice and adapter rank.

\subsection{Synthetic Experiments} 

We first compare the dynamics of BaLoRA and standard LoRA on a toy framework which aligns closely with our theoretical analysis. Specifically, we consider the optimization problems,
$\min_{(A, B)} \lVert W^\star -(W + AB)\rVert_F^2$ and $\min_{(A, B)} \lVert W^\star - (W_1 + A_1B_1)(W_2  + A_2B_2)\rVert_F^2$ for $(A,B)\in \R^{a\times r}\times \R^{r\times a}$. The first problem is encompassed by the setting studied in \Cref{sec:theory}, while the second extends this setting to fine-tuning simultaneously both layers of a 2-layer linear network, which introduces more complex interactions between layers. We apply LoRA and BaLoRA, optimized with Adam, to these problems across eight different initialization seeds.
In \Cref{fig:synthetic_data}, we report the loss over iterations for a fixed learning rate. We see that for both configurations (one or two layers), BaLoRA starts slower than LoRA, then enters a fast convergence regime where it significantly outperforms LoRA. This confirms our insights from \Cref{sec:theory} and extends their scope to fine-tuning two layers simultaneously.

\begin{figure}[t]
    $$\begin{array}{r@{}c@{\hspace{0.25cm}}c}
    \scriptscriptstyle
        &\mbox{\fontsize{9}{10.8}\selectfont \hspace{0.5cm}One layer} & \mbox{\fontsize{9}{10.8}\selectfont \hspace{0.5cm}Two layers} \\
        \rotatebox{90}{~~~~~~~~~~~~~~~~~~~~~~\fontsize{8.5}{10.8}\selectfont Loss} &
       \includegraphics[width=0.45\linewidth]{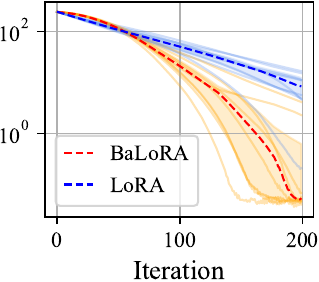} 
       & \includegraphics[width=0.45\linewidth]{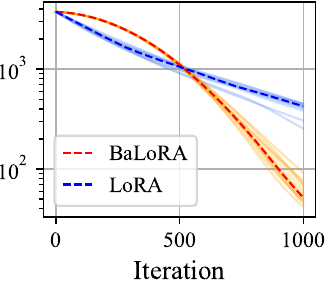}
    \end{array}$$
    \caption{\textbf{Synthetic experiments.} Evolution of the loss of LoRA vs. BaLoRA. The dotted lines are the median of 8 curves with different seeds for the initialization, for a fixed target. Both methods use the standard LoRA init, with $A_0 = 0$, $B_0$ random Gaussian, a scaling $\alpha/r = 1$, and a LoRA rank of $4$. The left (resp. right) plot corresponds to a square one-layer linear network of size $20$ (resp. a two-layer linear network of size $20$, whose layers are both fine-tuned). After a slower start, BaLoRA converges faster in both situations.}
    \label{fig:synthetic_data}
    \vspace{-2mm}
\end{figure}

\subsection{Experiments with Large Language Models}
\label{subsec:llm_expes}

We scale up the experiments to large language models and real-world data.
We fine-tune the pretrained models Llama-3.2-3B \citep{meta2024llama32} and Qwen-2.5-3B \citep{qwen25technicalreport}, evaluating their abilities in language modeling with the dataset Wikitext-2-raw-v1 \citep{merity2016pointer}, in dialogue with the dataset WizardLM \cite{xu2023wizardlm}, instruction-following using Alpaca \cite{alpaca} and OpenOrca \cite{OpenOrca}, their mathematical reasoning with the datasets MetaMathQA \citep{yu2023metamath}, GSM8K \citep{cobbe2021gsm8k} and DeepMind Mathematics \cite{saxton2019analysing}, their coding abilities with the dataset CodeFeedback \citep{zheng2025codefeedback}, their scientific reasoning and text processing with arXiv \cite{arxiv_org_submitters_2024}, and their general natural language understanding with OpenHermes \cite{OpenHermes2.5}.
Since our focus is on optimization speed, we compare methods by reporting the loss on held-out test sets.

\textbf{General setup.\;}
In all the experiments, we simultaneously fine-tune all MLP layers with the optimizer AdamW \citep{loshchilov2017decoupled}, while keeping the attention layers frozen.
The learning rate remains constant throughout the training.
We perform extensive experiments with a LoRA rank equal to 8, complemented with a rank ablation ($r\in \{2, 4, 8, 16, 32, 64, 128\}$) on the datasets DeepMind Mathematics and arXiv.
For the datasets Wikitext, MetaMathQA and GSM8K, we tune the learning-rate and the scaling at initialization—we call scaling the scalar $\alpha$ that multiplies the LoRA adapter: $W + \alpha AB$—independently for each method, by running a sweep of learning rates $\gamma$ and scalings $\alpha$, and selecting the pair $(\gamma, \alpha)$ with the smallest evaluation loss at the end of the fine-tuning.
For every other dataset $\mathcal{D}\in \{\mbox{Alpaca, }\allowbreak \mbox{CodeFeedback, }\allowbreak \mbox{OpenHermes, }\allowbreak \mbox{OpenOrca, }\allowbreak \mbox{WizardLM, }\allowbreak \mbox{DeepMind Mathematics, }\allowbreak \mbox{arXiv} \}$, we use the MetaMathQA sweep as a reference: we pick the learning rate and scaling that have the smallest evaluation loss after fine-tuning on a subset of MetaMathQA of size close to $\mathcal{D}$, which provides dataset-specific and method-specific hyperparameters without having to run an entire sweep.

\textbf{Baselines.\;} We compare BaLoRA with several related LoRA-variants. We do not include other PEFT methods such as Galore, as they are not directly comparable to the adaptation budget of LoRA-style methods.
\begin{enumerate}[noitemsep, topsep=0pt]
    \item \emph{Standard LoRA} \cite{hu2022lora}: each pretrained weight matrix is fine-tuned by adding a trainable product of low-rank adapters $AB$. $B$ is initialized with Kaiming initialization and $A$ is initially set to 0.
    \item \emph{LoRA-GA} \citep{wang2024loragalowrankadaptationgradient}: the adapters $A, B$ are initially balanced and such that $AB$ is the best rank-$r$ approximation of the full gradient of the loss (as a function of the weights, without LoRA adapters). Then, $A,B$ are optimized without constraints, as in LoRA. In particular, $(A,B)$ does not stay balanced.
    \item \emph{OLoRA} \citep{buyukakyuz2024olora}: $A, B$ are initialized as the QR decomposition of the pretrained weight matrix, truncated at rank $r$. In particular, $A,B$ are not balanced and are then optimized as in LoRA.
    \item \emph{DoRA} \citep{liu2024dora}: the structure of the low-rank adaptation is changed by decoupling the magnitude of $AB$ and its direction. Both components are learnable.
    \item \emph{RefLoRA} \citep{zhang2025reflora}: like BaLoRA, this method enforces balancedness of the adapters $(A, B)$ across the optimization. However, the balancing is obtained by applying the map $\tilde P (A,B)= (AS^{1/2}, S^{-1/2}B)$ where $S = (A^\top A)^{-1/2}\allowbreak[(A^\top A)^{1/2} (B B^\top)(A^\top A)^{1/2}]^{1/2}\allowbreak(A^\top A)^{-1/2}$, which leads to a different balanced pair as with the BaLoRA projection $P(A,B)$. Moreover, RefLoRA starts the optimization with 100 steps of standard LoRA.
    \item \emph{Lora-RITE} \citep{yen2024lora}: the AdamW optimizer is replaced by a transformation-invariant optimizer, using adaptive matrix preconditioning.
    \item \emph{LORO} \citep{mo2025parameter}: at each step, each pair of adapters is jointly updated to ensure the product is transported along the Riemannian gradient of the manifold of rank-$r$ matrices.
\end{enumerate}

\textbf{Wikitext.\;} We fine-tune Llama-3.2-3B \cite{meta2024llama32} and Qwen-2.5-3B \cite{qwen25technicalreport} on the train split of Wikitext-2-raw-v1 \cite{merity2016pointer} (36.7k samples), with a context length of 1024 tokens. Learning rate and initialization scaling are selected from a $10\times 10$ sweep, and performance is evaluated via cross-entropy loss on the test split (4.36k samples).
Test losses are reported in Table \ref{table:wikitext_llama}. Figure \ref{fig:wikitext_llama_runtime} shows the test loss as a function of runtime for Llama, accounting for the computational overhead of each variant: under this metric, BaLoRA stands out as the best method. Figures \ref{fig:sensitivity_wikitext} and \ref{appfig:sensitivity_wikitext} compare hyperparameter sensitivity for Llama, highlighting that BaLoRA performs well across a wider range of learning rates and initialization scales.

\begin{figure}
    \centering
    \includegraphics[width=0.95\linewidth]{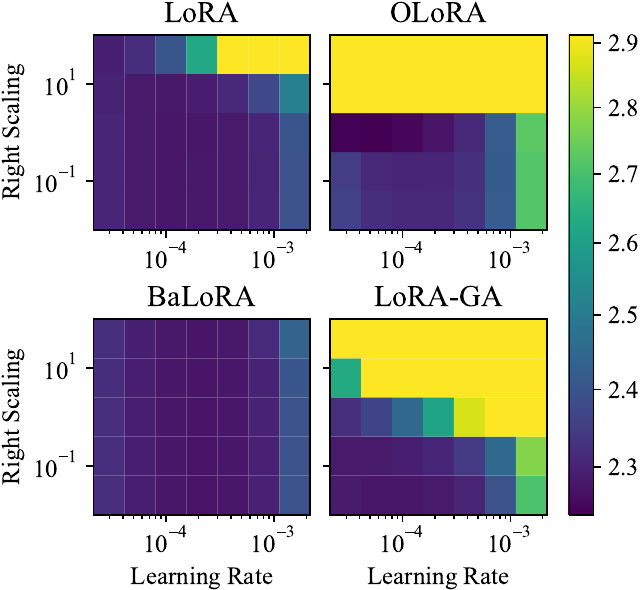}
    \caption{Hyperparameter sensitivity analysis (learning rates, initialization scalings) when fine-tuning Llama-3.2-3B on Wikitext-2-raw-v1. We observe that BaLoRA is significantly more stable to high scalings than all methods, and more stable to high learning rates than OLoRA and LoRA-GA.}
    \label{fig:sensitivity_wikitext}
\end{figure}

\textbf{MetaMathQA.} We fine-tune 
We fine-tune Qwen-2.5-3B on a 100k-sample subset of MetaMathQA using a $10\times 10$ sweep over learning rate $\gamma$ and scaling $\alpha$, averaging losses over 3 runs (Table \ref{table:metamath_qwen}). This sweep then serves as a reference for hyperparameter selection on new datasets: we record the evaluation loss for each $(\gamma, \alpha)$ pair every $0.03$ epoch fractions, and for a new dataset $\mathcal{D}$, we find the epoch fraction $k$ such that $k\times 0.03\times 10^5$ is closest to $|\mathcal{D}|$, then select the pair minimizing the MetaMathQA eval loss at fraction $k$.

\textbf{Alpaca, CodeFeedback, OpenHermes, OpenOrca, WizardLM.} We fine-tune Qwen-2.5-3B with BaLoRA against several baselines on the datasets Alpaca, CodeFeedback, OpenHermes, OpenOrca and WizardLM. The rank is set to 8. The learning rate and the initial scaling are chosen optimally per-method and per-dataset from the MetaMathQA sweep, as explained above.
Results are reported in Table \ref{table:qwen_small_datasets} and Figure \ref{fig:qwen_small_datasets}.
BaLoRA consistently outperforms LoRA and ranks in the top 2 across all settings. The best-performing method is RefLoRA, which enforces balanced iterates during optimization — notably, the two balanced methods outperform all others, supporting the claim that balanced iterates accelerate convergence. A comparison of BaLoRA and RefLoRA is provided in Appendix \ref{sec:balora_vs_reflora}.

\begin{table}
  \caption{Final evaluation loss of BaLoRA versus several LoRA variants, when fine-tuning Qwen-2.5-3B over several datasets (A=Alpaca, CF=CodeFeedback, OH=OpenHermes, OO=OpenOrca, WLM=WizardLM) with $r=8$. Best test loss is in bold, second best is underlined. BaLoRA and RefLoRA, which impose balanced iterations, outperform the other methods.}
  \label{table:qwen_small_datasets}
  \begin{center}
    \begin{small}
        \begin{tabular}{lcccccc}
          \toprule
          Method/Dataset  & A & CF & OH & OO & WLM \\ 
          \midrule
          \textsc{LoRA}      & 1.352 & 0.638 & 0.707 & 0.774          & 0.663 \\
          \textsc{DoRA}              & 1.352 & 0.639 & 0.707 & 0.776          & 0.662 \\
          \textsc{LoRA-Rite}         & 1.353 & 0.639 & 0.707 & 0.776          & 0.663 \\
          \textsc{LORO}          & 1.504 & 0.669 & 0.750 & 0.859          & 0.689 \\
          \textsc{OLoRA}         & 1.360 & 0.641 & 0.712 & 0.782          & 0.666 \\
          \textsc{RefLoRA}          & \underline{1.350} & \textbf{0.638} & \textbf{0.706} & \textbf{0.773} & \textbf{0.661} \\
          \textsc{BaLoRA}         & \textbf{1.350} & \underline{0.638} & \underline{0.707} & \underline{0.773}          & \underline{0.662} \\
          \bottomrule
        \end{tabular}
    \end{small}
  \end{center}
  \vskip -0.1in
\end{table}

\textbf{Impact of the rank: DeepMind Mathematics and arXiv.}
We test the impact of the rank of the adapters by fine-tuning Qwen-2.5-3B on 1B-token subsets of DM Mathematics and arXiv, with BaLoRA versus several baselines, for $r\in \{2, 4, 8, 16, 32, 64, 128\}$. For a given method, we take the same learning rate and initial scaling for all rank values, selected from the MetaMathQA sweep with $r=8$. The results are reported in Tables \ref{table:dm_rank_sweep} and \ref{table:arxiv_rank_sweep}, and Figures \ref{fig:dm_all_ranks} and \ref{appfig:arxiv_all_ranks}.
BaLoRA significantly outperforms the other methods, including RefLoRA, for higher ranks ($r\in \{64, 128\})$.

\begin{figure}
    \centering
    \includegraphics[width=\linewidth]{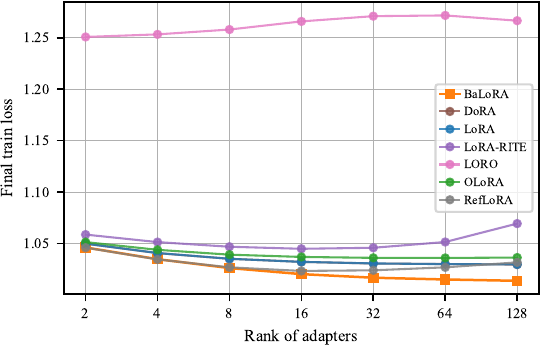}
    \caption{Impact of the rank of the adapters on the final train loss when fine-tuning Qwen-2.5-3B on a 1B subset of the DeepMind Mathematics Dataset, selecting per-method optimal hyperparameters from the MetaMathQA sweep with $r=8$. BaLoRA is the best method for almost all ranks, and has a clear edge for larger ranks.}
    \label{fig:dm_all_ranks}
\end{figure}

\begin{table}[t]
  \caption{Impact of the rank of the adapters on the final train loss when fine-tuning Qwen-2.5-3B on a 1B subset of the DeepMind Mathematics Dataset. For each method, the learning rate and the scaling at initialization are selected from a sweep on MetaMathQA: concretely, we pick the hyperparameters that achieve the smallest evaluation loss when fine-tuning Qwen on MetaMathQA with $r=8$, which provides a fair procedure for choosing the hyperparameters of each method. Best loss is in bold, second best loss underlined. BaLoRA is the best method for almost all ranks, and has a clear edge for larger ranks. The full table is in Appendix \ref{appfig:full_table_rank}.}
  \label{table:dm_rank_sweep}
  \begin{center}
    \begin{small}
        \begin{tabular}{lccccccc}
          \toprule
          Method/rank & $8$ & $16$ & $32$ & $64$ & $128$ \\
          \midrule
          \textsc{LoRA}               & 1.035          & 1.032          & 1.031          & 1.030          & 1.030          \\
          \textsc{DoRA}               & 1.035          & 1.032          & 1.031          & 1.030          & 1.030          \\
          \textsc{LoRA-RITE}         & 1.047          & 1.045          & 1.046          & 1.052          & 1.069          \\
          \textsc{LoRO}             & 1.258          & 1.266          & 1.271          & 1.272          & 1.267          \\
          \textsc{OLoRA}             & 1.039          & 1.037          & 1.036          & 1.036          & 1.036          \\
          \textsc{RefLoRA}    & \underline{1.027} & \underline{1.023} & \underline{1.024} & \underline{1.027} & \underline{1.032} \\
          \textsc{BaLoRA}     & \textbf{1.026} & \textbf{1.020} & \textbf{1.017} & \textbf{1.015} & \textbf{1.014} \\
          \bottomrule
        \end{tabular}
    \end{small}
  \end{center}
  \vskip -0.1in
\end{table}

\textbf{Peak GPU memory comparison.} We compare the memory consumption of several benchmarked methods, including BaLoRA. According to the results reported in Table \ref{apptable:memory_consumption}, BaLoRA adds negligible memory overhead to standard LoRA.

\textbf{Discussion.}
In our experiments, BaLoRA and RefLoRA consistently stand out as the best methods. Both enforce balanced iterates throughout optimization, supporting the claim that balancing accelerates convergence. Additionally, BaLoRA excels at fine-tuning with large ranks while adding negligible computational overhead over standard LoRA.

\section{Conclusion}

This paper presents a theoretical analysis of LoRA's convergence dynamics, revealing how its overparameterization induces varying condition numbers across minimizers. We identify balanced minimizers — which achieve optimal conditioning — as a critical factor for efficient optimization.
Building on this insight, we introduce BaLoRA, a novel extension of LoRA that enforces balance by projecting adapters onto the hyperbalanced manifold after each optimization step. This preserves the adapted weight matrix while improving its conditioning, yielding faster convergence and greater robustness to hyperparameter choices, with negligible overhead.
Our empirical evaluations on Llama-3.2-3B and Qwen-2.5-3B show that BaLoRA consistently outperforms standard LoRA and matches or surpasses state-of-the-art variants in accuracy and stability across learning rates, initialization scales, and adapter ranks — with a particular advantage at large ranks ($r\in\{64, 128\}$).

\section*{Impact Statement}

This paper presents work whose goal is to advance the field of Machine
Learning. There are many potential societal consequences of our work, none
which we feel must be specifically highlighted here.

\bibliography{references}
\bibliographystyle{icml2026}

\newpage
\appendix
\onecolumn
\section{Spectrum of the limiting Hessian}

\subsection{Proof of Proposition \ref{prop:cropped_hessian_spectrum}}
\label{proof:cropped_hessian_spectrum}

Computing $H$ is straightforward. To diagonalize this matrix, notice that $H = MM^\top$ with
$$M\coloneqq \begin{pmatrix}
B\otimes I_a \\ I_b \otimes A^\top
\end{pmatrix} \in \R^{r(a + b)\times ab}.$$
\begin{itemize}
    \item \textbf{Kernel of $H$.} The kernel of $H$ is equal to the kernel of $M^\top$, which can be written as $\{ \mathrm{vect}\,(AR, -RB):R\in \R^{r\times r}\}$.
    Indeed, unvectorizing the equation $M^\top \mathrm{vect}\,(D, E) = 0$ gives $DB + AE = 0$, and such $(D, E)$ can be rewritten as $(AR, -RB)$ for $R = A^+D = -EB^+$.
    Therefore, $\ker H$ is of dimension $r^2$.
    \item \textbf{Non-zero spectrum of $H$.} To find the non-zero eigenvalues of $H$ and associated eigenvectors, we use the following observation.
    \begin{lemma}
    \label{lem:eigenvectors}
        Let $M$ be a matrix and $x$ be an eigenvector of $M^\top M$ associated with a non-zero eigenvalue $\lambda$.
        Then $Mx$ is an eigenvector of $MM^\top$, associated with the eigenvalue $\lambda$.
    \end{lemma}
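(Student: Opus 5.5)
The lemma is a standard linear-algebra fact, and the plan is to verify it directly from the eigenvalue equation. Let $x$ satisfy $M^\top M x = \lambda x$ with $\lambda \neq 0$ and $x \neq 0$.

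First, we check that $Mx \neq 0$. If $Mx = 0$, then $\lambda x = M^\top M x = M^\top 0 = 0$. Since $\lambda \neq 0$, this forces $x = 0$, contradicting that $x$ is an eigenvector. Equivalently, one can note that $\|Mx\|^2 = x^\top M^\top M x = \lambda \|x\|^2 > 0$, which additionally shows that $\lambda > 0$.

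Second, we apply $MM^\top$ to $Mx$ and use associativity:
\[
MM^\top (Mx) = M(M^\top M x) = M(\lambda x) = \lambda\, Mx .
\]
So $Mx$ is a nonzero vector that $MM^\top$ sends to $\lambda Mx$, i.e., it is an eigenvector of $MM^\top$ with eigenvalue $\lambda$.

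For its use in the proof of Proposition~\ref{prop:cropped_hessian_spectrum}, it is also worth recording that the map $x \mapsto Mx$ is injective on each nonzero eigenspace of $M^\top M$. Indeed, $\langle Mx, My\rangle = \lambda \langle x, y\rangle$, so $M$ maps an orthonormal eigenbasis of that eigenspace to an orthogonal family of nonzero vectors. Hence the nonzero eigenvalues of $MM^\top = H$ and of $M^\top M$ coincide, with the same multiplicities.

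This reduces the spectral computation to the smaller, more tractable matrix
\[
M^\top M = (BB^\top)\otimes I_a + I_b \otimes (A^\top A),
\]
which is a Kronecker sum whose eigenvalues are read off from the SVDs of $A$ and $B$. There is no real obstacle in the lemma itself. The only point requiring care is nonvanishing of $Mx$, which is exactly where the hypothesis $\lambda \neq 0$ is used.
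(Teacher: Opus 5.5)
Your proof is correct and is the standard argument the paper relies on (the paper states this lemma as an observation without proof): $\lambda\neq 0$ forces $Mx\neq 0$, and $MM^\top(Mx)=M(M^\top Mx)=\lambda\,Mx$. One slip in your closing aside, which does not affect the lemma: the Kronecker sum is $M^\top M=(B^\top B)\otimes I_a+I_b\otimes(AA^\top)\in\R^{ab\times ab}$, whereas $(BB^\top)\otimes I_a$ and $I_b\otimes(A^\top A)$, which you wrote, are the diagonal blocks of $H=MM^\top$ and not the terms of $M^\top M$.
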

    We have $$M^\top M = (B^\top B) \otimes I_a + I_b \otimes (AA^\top).$$ 
    Let $(\lambda, x)$ be an eigenpair of $AA^\top$ and $(\mu, y)$ an eigenpair of $B^\top B$.
    Then $(\lambda + \mu, y\otimes x)$ is an eigenpair of $M^\top M$.
    Therefore, denoting $\lambda_1, \dots, \lambda_r$ and $\mu_1, \dots, \mu_r$ the non-zero eigenvalues of $AA^\top$ and $B^\top B$ respectively, with associated unit eigenvectors $x_1, \dots, x_r$ and $y_1, \dots, y_r$, and denoting $x_{r+1},\dots, x_a$ and $y_{r+1},\dots, y_b$ unit bases of $\ker AA^\top$ and $\ker B^\top B$, the eigenpairs of $M^\top M$ associated with non-zero eigenvalues are
    $$(\lambda_i + \mu_j, y_j \otimes x_i)_{1\le i,j\le r} \cup (\lambda_i, y_{r+j}\otimes x_i)_{\substack{1\le j\le b-r\\ 1\le i\le r}}\cup (\mu_j, x_j \otimes x_{r + i})_{\substack{1\le i\le a-r \\ 1\le j\le r}}.$$
    Using Lemma \ref{lem:eigenvectors}, the eigenpairs of $MM^\top $ with non-zero eigenvalues are thus
    $$(\lambda_i + \mu_j, M(y_j \otimes x_i))_{1\le i,j\le r} \cup (\lambda_i, M(y_{r+j}\otimes x_i))_{\substack{1\le j\le b-r\\ 1\le i\le r}}\cup (\mu_j, M(y_j \otimes x_{r + i}))_{\substack{1\le i\le a-r \\ 1\le j\le r}}.$$
\end{itemize}

\subsection{Proof of Proposition \ref{prop:full_hessian_spectrum}}
\label{proof:full_hessian_spectrum}

\underline{Sharpness of the Hessian.}
Let us first compute the largest eigenvalue of $H$. Denote
$H_1 \coloneqq \left (\begin{smallmatrix}
    (BB^\top) \otimes I_a & B\otimes A \\B^\top \otimes A^\top  & I_b\otimes (A^\top A)
\end{smallmatrix}\right )$ and $H_2 \coloneqq \left (\begin{smallmatrix}
    0_{ar\times ar} & (I_r \otimes (AB - Z)) K_{r,b} \\ ((AB - Z)^\top \otimes I_r) K_{a,r} & 0_{br\times br}.
\end{smallmatrix} \right )$, so that $H = H_1 + H_2$.
\begin{lemma}
\label{lem:H_2_spectrum}
    The largest eigenvalue of $H_2$ is $\sigma_{r+1}(Z)$. The smallest eigenvalue of $H_2$ is $-\sigma_{r+1}(Z)$.
\end{lemma}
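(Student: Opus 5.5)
The plan is to identify $H_2$ with the Hessian of a bilinear form and read off its spectrum from that form. For directions $(D,E)\in\R^{a\times r}\times\R^{r\times b}$, with $E_{\mathrm{res}}\coloneqq AB-Z$, the quadratic form of $H_2$ is
\[
\langle (D,E), H_2 (D,E)\rangle = 2\,\langle AB - Z, DE\rangle_F = 2\,\mathrm{tr}\big(E_{\mathrm{res}}^\top D E\big),
\]
since $H_2$ is exactly the second-order contribution of the residual term in $\tfrac12\lVert Z-(A+D)(B+E)\rVert_F^2$. Equivalently, $H_2$ is the symmetric block matrix $\left(\begin{smallmatrix}0 & T\\ T^\top & 0\end{smallmatrix}\right)$, where $T\colon E\mapsto E_{\mathrm{res}}E^\top$ (as a map $\R^{r\times b}\to\R^{a\times r}$, up to the vec/transpose conventions encoded by the commutation matrices). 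The eigenvalues of such a block matrix are $\pm$ the singular values of $T$, padded with zeros. Hence the largest and smallest eigenvalues are $\pm\lVert T\rVert_{\mathrm{op}}$, and the spectrum is symmetric about $0$.

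Next I compute $\lVert T\rVert_{\mathrm{op}}$. For $E\in\R^{r\times b}$ we have $T(E)=E_{\mathrm{res}}E^\top$, which acts on each of the $r$ columns of $E^\top$ independently by $E_{\mathrm{res}}$. So $T\cong I_r\otimes E_{\mathrm{res}}$ up to permutations, and its singular values are those of $E_{\mathrm{res}}$, each repeated $r$ times. This is consistent with the factor $I_r\otimes(AB-Z)$ in Proposition \ref{prop:full_hessian}, since $K_{r,b}$ is orthogonal and does not change singular values. It follows that $\lambda_{\max}(H_2)=\sigma_1(AB-Z)$ and $\lambda_{\min}(H_2)=-\sigma_1(AB-Z)$.

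The remaining step is to show $\sigma_1(AB-Z)=\sigma_{r+1}(Z)$ at a global minimizer. By Eckart--Young, a global minimizer of \eqref{eq:toy_loss} has $AB$ equal to a best rank-$r$ approximation of $Z$, namely $AB=\sum_{i\le r}\sigma_i u_i v_i^\top$ for some SVD of $Z$. This holds even when $\sigma_r(Z)=\sigma_{r+1}(Z)$, since every minimizer then still has this truncated-SVD form for some choice of singular vectors. The residual is therefore $-\sum_{i>r}\sigma_i u_iv_i^\top$, whose top singular value is $\sigma_{r+1}(Z)$. If $\mathrm{rk}\,Z=r$, this gives $0$, which is consistent.

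I expect the main obstacle to be the bookkeeping in the second step: checking carefully that $(I_r\otimes(AB-Z))K_{r,b}$ has the same singular values as $I_r\otimes(AB-Z)$, and that the symmetric block structure indeed yields the eigenvalues $\pm\sigma_k$. The clean way to handle this is the observation that $\left(\begin{smallmatrix}0&T\\T^\top&0\end{smallmatrix}\right)$ maps $(u_k,\pm v_k)$ to $\pm\sigma_k(u_k,\pm v_k)$ for each singular triple $(\sigma_k,u_k,v_k)$ of $T$. The Eckart--Young characterization in the third step is standard and can be taken as known.
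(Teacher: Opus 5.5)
Your proof is correct and follows the paper's argument. The paper also writes $H_2$ as the anti-diagonal block matrix with off-diagonal block $G=(I_r\otimes(AB-Z))K_{r,b}$, reads off the eigenpairs $(u_i,\pm v_i)$ with eigenvalues $\pm\tau_i$ from the singular triples of $G$, and identifies the $\tau_i$ with the tail singular values $\sigma_k(Z)$, $k\ge r+1$. Your argument that $K_{r,b}$ is a permutation (so $G$ has the same singular values as $I_r\otimes(AB-Z)$), together with your Eckart--Young step, makes explicit what the paper only asserts as ``easily computed.''
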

\begin{proof}
    Denote $G\coloneqq (I_r\otimes (AB - Z))K_{r,b} = (((AB - Z)^\top \otimes I_r)K_{a,r})^\top \in \R^{ra\times rb}$.
    Let $(u_1, \dots, u_{ra})$ and $(v_1, \dots, v_{rb})$ be respectively the left and right eigenvectors of $G$.
    Denote $\tau_1\ge \dots\ge \tau_\varrho$ the singular values of $G$, with $\varrho\coloneqq \mathrm{rk}\, G = r(\min(a, b) - r)$.
    \begin{itemize}
        \item The Kernel of $H_2$ is the span of $\{(\begin{smallmatrix} u_j \\ 0\end{smallmatrix}) : j=\varrho+1,\dots, ar\} \cup \{(\begin{smallmatrix} 0 \\ v_k\end{smallmatrix}) : k=\varrho+1,\dots, br\}$.
        \item For $i=1,\dots, \varrho$, let $x_i^+\coloneqq (\begin{smallmatrix}
            u_i \\ v_i
        \end{smallmatrix})$ and $x_i^-\coloneqq (\begin{smallmatrix}
            u_i \\ -v_i
        \end{smallmatrix})$.
        Then $x_i^+$ (resp. $x_i^-$) is an eigenvector of $H_2$ associated with the eigenvalue $\sigma_i$ (resp. $-\sigma_i$).
        The $\sigma_i$ can be easily computed, they are of the form $-\sigma_k(Z)$ for $k=r+1, \dots, \min(a, b)$, which proves the result.
    \end{itemize}
\end{proof}
The eigenvectors of $H_1$ form a basis of the space $\R^{(a+b)r}$. We will prove that for any eigenvector $u$ of this basis, it holds $\lvert Hu \rvert \le (\sigma_1(A)^2 + \sigma_1(B)^2)\lvert u\rvert$.
\begin{itemize}
    \item If $u$ is in the kernel of $H_1$ and has unit norm, then $\lvert Hu\rvert  = \lvert H_2 u\rvert \le \lVert H_2\rVert_2 = \sigma_{r+1}(Z) \le \sigma_1(Z) \le \sigma_1(A)\sigma_1(B)\le \sigma_1(A)^2 +\sigma_1(B)^2$.
    \item If $u$ is of the form $M(y_j\otimes x_i)$ with the notations of the proof of Proposition \ref{prop:cropped_hessian_spectrum}, then
    \begin{equation}\label{eq:share_eigvals}H_2 u = \begin{pmatrix}
        (I_r \otimes (AB - Z))(A^\top x_i\otimes y_j) \\ ((AB - Z)^\top \otimes I_r)(x_i\otimes By_j)
    \end{pmatrix} = 0,\end{equation}
    so $\lvert Hu \rvert = \lvert H_1u\rvert \le (\sigma_1(A)^2 + \sigma_1(B)^2)\lvert u\rvert$.
    \item If $u$ is of the form $M(y_{r+j}\otimes x_i)$ for $1\le i\le r$ and $1\le j\le b-r$, it is easy to check that $H_1 u$ and $H_2 u$ are orthogonal. Then:
    \begin{align*}
        \lvert H u \rvert &= \sqrt{\lvert H_1u\rvert^2 + \lvert H_2u\rvert^2} \\
        &\le \sqrt{\sigma_i(A)^4\lvert u\rvert^2 + \sigma_{r+1}(Z)^2 \lvert u\rvert^2} \\
        &=\sqrt{\sigma_i(A)^4 + \sigma_{r+1}(Z)^2} \lvert u\rvert.
    \end{align*}
    We therefore need to prove that $\sqrt{\sigma_i(A)^4 + \sigma_{r+1}(Z)^2} \le \sigma_1(A)^2 + \sigma_1(B)^2$, or equivalently that $\sigma_1(A)^2 + \sigma_1(B)^2 - \sqrt{\sigma_1(A)^4 + \sigma_{r+1}(Z)^2}\ge 0.$
    We have $\sigma_1(B)^2 \ge \sigma_{1}(Z)^2 / \sigma_1(A)^2$.
    Then
    \begin{align*}
        \sigma_1(A)^2 + \sigma_1(B&)^2 - \sqrt{\sigma_1(A)^4 + \sigma_{r+1}(Z)^2}  \\ &\ge \sigma_1(A)^2 + \sigma_{1}(Z)^2 / \sigma_1(A)^2 - \sqrt{\sigma_1(A)^4 + \sigma_{r+1}(Z)^2} \\
        &\ge \sigma_1(A)^2 + \sigma_{r+1}(Z)^2 / \sigma_1(A)^2 - \sqrt{\sigma_1(A)^4 + \sigma_{r+1}(Z)^2} \\
        &= (\sigma_1(A)^4 + \sigma_{r+1}(Z)^2)\left (\frac{1}{\sigma_1(A)^2} - \frac{1}{\sqrt{\sigma_1(A)^4 + \sigma_{r+1}(Z)^2}}\right ) \\
        &\ge 0,
    \end{align*}
    which proves the result.
\end{itemize}

\underline{Smallest non-zero eigenvalue of the Hessian.}
We have $\lambda_{\mathrm{min}\neq 0}(H) = \lambda_{r(a+b)-r^2}(H)$, as the Kernel of $H$ has dimension $r^2$.
Equation \ref{eq:share_eigvals} shows that all the eigenvalues of $H_1$ are also eigenvalues of $H_2$. Therefore, $\lambda_{r(a+b)-r^2}(H) \le \min(\sigma_r(A)^2, \sigma_r(B)^2)$.
For the lower bound, we apply the Weyl inequality:
\begin{align*}
    \lambda_{r(a+b) - r^2}(H) &= \lambda_{r(a+b) - r^2}(H_1 + H_2) \\
    &\ge \lambda_{r(a+b) - r^2}(H_1) + \lambda_{r(a+b)}(H_2) \\
    &=\min(\sigma_r(A)^2, \sigma_r(B)^2) - \sigma_{r+1}(Z),
\end{align*}
according to Lemma \ref{lem:H_2_spectrum}.

\underline{When the minimizer is balanced,} it holds $\sigma_r(A)^2 = \sigma_r(B)^2 = \sigma_r(Z).$
This is the maximal value for the lower bound. Indeed, let $(A, B)$ be any minimizer of the loss $f$, i.e., $AB = LR_r(Z)$. Denote $U\Sigma V^\top$ the thin SVD of $LR_r(Z)$, i.e., with $\Sigma \succ 0$ of size $r\times r$. We can write $A = U\Sigma^{1/2} P$, $B = P^{-1}\Sigma^{1/2} V^\top$ for some invertible matrix $P\in GL_r(\R)$. Then $\sigma_r(Z) = \sigma_r(U\Sigma V^\top) = \sigma_r(\Sigma^{1/2} P P^{-1}\Sigma^{1/2})\ge \sigma_r(\Sigma^{1/2} P)\sigma_r(P^{-1}\Sigma^{1/2})$ by the Weyl inequality.
Hence, $\sigma_r(Z) \ge \sigma_r(A)\sigma_r(B)$. We have proven that balanced minimizers maximize the lower bound $\min(\sigma_r(A)^2, \sigma_r(B)^2) - \sigma_{r+1}(Z)$. Now, it is easy to check that when $(A,B)$ is balanced, the vector $\begin{pmatrix}
    o_r \otimes u_{r+1}\\ v_{r+1}\otimes o_r
\end{pmatrix}$
with 
\begin{itemize}
    \item $o_r$ an eigenvector of $A^\top A = BB^\top$ associated with eigenvalue $\sigma_r(Z)$,
    \item $u_{r+1}$ the column $r+1$ of $U$,
    \item $v_{r+1}$ the column $r+1$ of $V$,
\end{itemize}
is an eigenvector of $H$ with the eigenvalue $\sigma_r(Z) - \sigma_{r+1}(Z)$, which proves that $\lambda_{\mathrm{min}\neq 0}(H) = \sigma_r(Z) - \sigma_{r+1}(Z)$ when $(A, B)$ is balanced.

\subsection{Proof of Proposition \ref{prop:deeper_case}}
\label{appsubsec:proof_deep_case}

We denote $\theta:=(A,B)$ and $\phi := \varphi(\theta):=AB$.
At an interpolation point, $h(\phi)=Z$, so the Gauss--Newton identity gives
$
D^2 f(\theta)
=
D \varphi(\theta)^\top\,
D h(\phi)^\top\,D h(\phi)\,
D \varphi(\theta).
$
Since $dn\ge ab$, standard singular value inequalities yield
$
\kappa(f)(\theta)
\le
\kappa(D h(\phi))^2\,
\kappa(D \varphi(\theta))^2,
$
where $\kappa(D \varphi(\theta))$ is defined similarly as $\kappa(D h(AB))$ (since $\phi$ also increases dimensions), but with singular values defined on the orthogonal complement of $\ker(D\varphi(\theta))$ (equivalently, by ignoring the zero singular values due to gauge invariance).
Proposition~\ref{prop:cropped_hessian_spectrum} provides an upper bound on the conditioning of these nonzero singular values, namely
$
\kappa(D \varphi(\theta))^2
\le
\frac{\sigma_1(A)^2+\sigma_1(B)^2}{\min(\sigma_r(A)^2,\sigma_r(B)^2)},
$
which gives the claimed inequality.

\section{Intrinsic reformulation and structure of the hyperbalanced manifold $\Hyp$}

\subsection{Proof of Proposition \ref{prop:Zupdate}}
\label{appsubsec:Zupdate_proof}

Since $f(A,B)=g(AB)$, the chain rule yields, writing $G_k = \nabla g(X_k)$,
$\nabla_A f(A_k,B_k)=G_k B_k^\top$,
$\nabla_B f(A_k,B_k)=A_k^\top G_k$.
Hence, the pre-projection product is
\begin{align*}
\widetilde X_k
&= \widetilde A_k \widetilde B_k 
= (A_k-\tau_k G_k B_k^\top)\,(B_k-\tau_k A_k^\top G_k) \\
&= X_k - \tau_k\bigl(A_kA_k^\top G_k + G_k B_k^\top B_k\bigr) + \tau_k^2\, G_k\, X_k^\top G_k .
\end{align*}
By construction of the projection $P$, the product is preserved by $P$, hence
$
X_{k+1}=A_{k+1}B_{k+1}=\widetilde X_k.
$
Since $(A_k,B_k)\in\mathcal{H}$, we take an SVD $X_k=U_k S_k V_k^\top$ and balanced factors
$A_k=U_k S_k^{1/2}$ and $B_k=S_k^{1/2}V_k^\top$. Then,
$A_kA_k^\top = U_k S_k U_k^\top = (X_k X_k^\top)^{1/2}$, 
$B_k^\top B_k = V_k S_k V_k^\top = (X_k^\top X_k)^{1/2}$.
Substituting into $\widetilde X_k$ gives the claimed formula
$
    X_{k+1} = X_k - \tau_k H_{X_k}[G_k] + \tau_k^2\, G_k\, X_k^\top G_k,
$

\subsection{Structure of the hyperbalanced manifold}

The following proposition further details the structure of the set $\Hyp$.

\begin{proposition}[Equivalent descriptions of $\Hyp$]
\label{prop:equivM}
The set $\Hyp$ is a smooth manifold in a neighborhood of full-rank points, with dimension equal to that of the rank-$r$ manifold
$\mathcal{N}_r := \{ X\in \mathbb{R}^{a\times b} : \operatorname{rank}(X) \leq r\}$.
The product mapping $(A,B)\mapsto AB$ is a surjective map from $\Hyp$ onto $\mathcal{N}_r$.  
If one locally fixes a consistent sign convention for the singular vectors in an SVD decomposition $X=USV^\top$, then the mapping
$X \;\mapsto\; (U S^{1/2},\, S^{1/2}V^\top )$
defines a smooth local inverse at points $X$ with non-repeated singular values.  
Equivalently, $\Hyp$ admits the explicit description
\begin{equation}\label{eq:equiv-descr-hyper-appendix}
\Hyp = \{ \bigl(U S^{1/2},\, S^{1/2}V^\top\bigr) \;:\; U^\top U =  V^\top V = I_r,\ \; S\in \mathbb{D}^r_{+} \}.
\end{equation}
\end{proposition}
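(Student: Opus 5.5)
I would prove \Cref{prop:equivM} by first establishing the explicit description \eqref{eq:equiv-descr-hyper-appendix}, and then reading off surjectivity, the local inverse and the dimension count from it.

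\textbf{Step 1: the explicit description.}
For the inclusion ``$\supseteq$'', take $A=US^{1/2}$ and $B=S^{1/2}V^\top$ with $U^\top U=V^\top V=I_r$ and $S\in\mathbb{D}^r_+$. Then $A^\top A=S^{1/2}U^\top U S^{1/2}=S$, and similarly $BB^\top=S$, so $(A,B)\in\Hyp$.

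For the inclusion ``$\subseteq$'', let $(A,B)\in\Hyp$ with $A^\top A=BB^\top=S$. Let $k$ be the number of nonzero diagonal entries of $S$; since $S\in\mathbb{D}^r_+$ these are the first $k$.
\begin{itemize}
\item The columns $a_i$ of $A$ are pairwise orthogonal with $\|a_i\|^2=S_{ii}$. The last $r-k$ columns therefore vanish.
\item Set $u_i=a_i/\sqrt{S_{ii}}$ for $i\le k$, and complete $(u_1,\dots,u_k)$ to an orthonormal family of $r$ vectors (possible since $r\le a$). This gives $U$ with $U^\top U=I_r$ and $A=US^{1/2}$. The zero columns are harmless, because the corresponding entries of $S^{1/2}$ are zero.
\item The same argument applied to the rows of $B$ gives $V$ with $B=S^{1/2}V^\top$.
\end{itemize}

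\textbf{Step 2: surjectivity and the local inverse.}
Surjectivity of $(A,B)\mapsto AB$ onto $\mathcal{N}_r$ follows from the reduced SVD. Any $X$ of rank at most $r$ can be written $X=USV^\top$, with $r$ orthonormal columns in $U$ and $V$ (padded if the rank is below $r$) and singular values sorted decreasingly in $S\in\mathbb{D}^r_+$. Then $(US^{1/2},S^{1/2}V^\top)\in\Hyp$ is a preimage of $X$.

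For the local inverse, consider the open set of $X$ of rank exactly $r$ with distinct singular values. On this set, the SVD factors $U(X),S(X),V(X)$ depend smoothly on $X$ once a continuous sign convention is fixed. This is standard perturbation theory for simple eigenvalues of $X^\top X$ and $XX^\top$, or the implicit function theorem. Hence $X\mapsto(US^{1/2},S^{1/2}V^\top)$ is smooth, and composing with the product gives back $X$.

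It is also the inverse on the other side, for $(A,B)$ in a small neighbourhood in $\Hyp$. By Step 1, such an $(A,B)$ is a $(US^{1/2},S^{1/2}V^\top)$ whose product $X=USV^\top$ is itself an SVD. When the singular values are simple, this SVD is unique up to column signs, and these are pinned down by the sign convention.

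\textbf{Step 3: manifold structure and dimension.}
Near such points, $\Hyp$ is the image of an open subset of the smooth manifold $\mathcal{N}_r$ (near full-rank points) under a smooth injective map $\iota$. This map has a smooth left inverse, namely the product. Therefore $\iota$ is an immersion and a homeomorphism onto its image, so $\Hyp$ is locally an embedded submanifold of dimension $\dim\mathcal{N}_r=r(a+b-r)$.

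\textbf{Main obstacle.}
The delicate step is the regularity claim: making precise that it holds only near full-rank points with non-repeated singular values, and handling the sign ambiguity. Near repeated singular values, $\Hyp$ contains orbits such as $(UQS^{1/2},S^{1/2}Q^\top V^\top)$ for orthogonal $Q$ commuting with $S$, and the local inverse breaks down. So I would restrict the claim to the generic open set, as the statement does.
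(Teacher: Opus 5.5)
Apart from one point, your argument is correct and more complete than the paper's. The paper's proof only establishes \eqref{eq:equiv-descr-hyper-appendix}, by setting $U:=AS^{-1/2}$ and $V:=B^\top S^{-1/2}$. That silently assumes $S\succ 0$. Your column-completion argument in Step 1 also covers the zero diagonal entries that $\mathbb{D}^r_+$ allows. Your Steps 2 and 3 argue claims (surjectivity, local inverse, embedding) that the paper only asserts.

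In the sign-pinning step, state how the convention is chosen: the local inverse must send $A_0B_0$ to $(A_0,B_0)$. A nearby $(A,B)\in\Hyp$ then differs from the image of $AB$ under the local inverse by a diagonal sign matrix. That matrix is $I_r$ by continuity, since the columns of $A_0$ are nonzero.

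The gap is in Step 3 and your closing remark. The statement restricts only the local-inverse claim to non-repeated singular values. The manifold and dimension claim is made near \emph{all} full-rank points, including those where $A^\top A=BB^\top$ has repeated diagonal entries, so your restriction rests on a misreading. Your chart cannot reach these points. It factors through $(A,B)\mapsto AB$, which is not locally injective on $\Hyp$ there, precisely because of the $O(r)$-orbits you describe. Yet $\Hyp$ does not degenerate at these points. It is a manifold with boundary (corners) of dimension $r(a+b-r)$, which is the sense adopted in Section~\ref{sec:BaLoRA}.

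The missing idea is to use $A^\top A$ rather than $AB$ as a coordinate. Let $\mathrm{St}(m,r)$ be the set of $m\times r$ matrices with orthonormal columns. Set $\Phi(U,S,V):=(US^{1/2},S^{1/2}V^\top)$ for $U\in\mathrm{St}(a,r)$, $V\in\mathrm{St}(b,r)$ and $S$ positive diagonal. On the open set where $A$ has full column rank, the map $(A,B)\mapsto\bigl(A(A^\top A)^{-1/2},\,A^\top A,\,B^\top(A^\top A)^{-1/2}\bigr)$ is smooth and is a left inverse of $\Phi$, so $\Phi$ is an embedding. By your Step 1, $\Hyp$ near full-rank points is the image under $\Phi$ of $\mathrm{St}(a,r)\times\{S\in\mathbb{D}^r_+ : S\succ 0\}\times\mathrm{St}(b,r)$. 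This set has dimension $\bigl(ar-\tfrac{r(r+1)}{2}\bigr)+r+\bigl(br-\tfrac{r(r+1)}{2}\bigr)=r(a+b-r)$. Its boundary is exactly the repeated-singular-value locus you excluded.

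This chart also gives the manifold claim at generic points without any SVD perturbation theory. Your SVD argument is then needed only for the local-inverse claim.
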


\begin{proof}We prove Equation (\ref{eq:equiv-descr-hyper}). 
($\subseteq$) Take $(A,B)\in\Hyp$ with $A^\top A=BB^\top=S\in\mathbb{D}^r_{+}$. Set $U:=A S^{-1/2}$ and $V:=B^\top S^{-1/2}$. Then $U^\top U=I_r$, $V^\top V =I_r$, and $A=U S^{1/2}$, $B=S^{1/2}V^\top$.
($\supseteq$) Conversely, if $A=U S^{1/2}$ and $B=S^{1/2}V^\top$ with $U^\top U=V^\top V=I_r$ and $S\in\mathbb{D}^r_{+}$, then $A^\top A=S$ and $BB^\top=S$, so $(A,B)\in\Hyp$.
\end{proof}

\begin{remark}[Smoothness of $P$]
\label{rk:smoothness_P}
It is important to note that the definition of $P$ in Equation (\ref{eq:BaLoRA-retraction}) is not entirely unambiguous: singular vectors are determined only up to sign, and in the presence of repeated singular values they are even invariant under rotations within the degenerate subspace. When analyzing the convergence of optimization schemes over $X=AB$, this ambiguity is harmless, as discussed in Section~\ref{sec:BaLoRA-gd}. However, to guarantee that $P$ is smooth, one must restrict attention to points $X$ with distinct singular values and adopt a locally consistent sign convention for the singular vectors. 
\end{remark}

With Remark \ref{rk:smoothness_P} in mind, the next proposition shows that $P$ locally enables the definition of a retraction map (\cite{AbsilMalick2012}) that preserves the product $AB$. 

\begin{proposition}[Properties of $P$]
\label{prop:Pprops}
Let $P$ be as in Equation (\ref{eq:BaLoRA-retraction}). 
Then $P(A,B)\in\Hyp$, and if $(A,B)\in\Hyp$, then $P(A,B)=(A,B)$.
Furthermore, $P$ preserves the product $\Pi(A,B):=AB$, i.e., $\Pi\bigl(P(A,B)\bigr) = \Pi(A,B)$.
The map $P$ acts locally as a first--order retraction on $\Hyp$: for any $Z := (A,B)\in\Hyp$ such that $AB$ has distinct singular values, and for any $\Delta\in T_{Z}\Hyp$ in the tangent plane of $\Hyp$ at $Z$, the map $(Z,\Delta) \mapsto P(Z+\Delta)$ defines a first-order retraction, namely
$P(Z+\Delta) = Z + \Delta + o(\Delta)$.
\end{proposition}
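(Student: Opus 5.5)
The first three claims (membership in $\Hyp$, idempotence on $\Hyp$, product preservation) follow directly from the definition of $P$ and from the description of $\Hyp$ in Proposition~\ref{prop:equivM}; the retraction property is the only real work.

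Write $X=AB$ with reduced SVD $X=USV^\top$, $U^\top U=V^\top V=I_r$, $S\in\mathbb{D}^r_+$. Then $P(A,B)=(US^{1/2},S^{1/2}V^\top)$ lies in $\Hyp$ by \eqref{eq:equiv-descr-hyper-appendix}. Its product is $US^{1/2}S^{1/2}V^\top=USV^\top=X$, which gives $\Pi\circ P=\Pi$. If $(A,B)\in\Hyp$, Proposition~\ref{prop:equivM} lets us write $A=U'S^{1/2}$ and $B=S^{1/2}V'^\top$. Then $AB=U'SV'^\top$ is itself a reduced SVD of $X$ with non-increasing singular values. Choosing this SVD in the definition of $P$, using the sign convention of Remark~\ref{rk:smoothness_P}, gives $P(A,B)=(A,B)$. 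One caveat concerns zero or repeated singular values: there the SVD is not unique, so idempotence holds only for the consistent choice of SVD, and I will state this explicitly.

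For the retraction property, fix $Z=(A,B)\in\Hyp$ with $AB$ having distinct nonzero singular values. Take $\Delta=(\delta_A,\delta_B)\in T_Z\Hyp$. The idea is to exhibit a smooth curve in $\Hyp$ that agrees with $Z+\Delta$ to first order and to which $P$ acts as the identity. By Proposition~\ref{prop:equivM}, the map $\Psi: X\mapsto (US^{1/2},S^{1/2}V^\top)$ is a smooth local inverse of $\Pi|_{\Hyp}$ near $X_0=AB$. Moreover $P=\Psi\circ\Pi$ on a neighborhood of $Z$ in the ambient space, since $P$ depends on $(A,B)$ only through $AB$ and uses the same SVD. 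Hence $P$ is smooth near $Z$, and
\[
P(Z+\Delta)=P(Z)+DP(Z)[\Delta]+O(\|\Delta\|^2)=Z+D\Psi(X_0)\,D\Pi(Z)[\Delta]+O(\|\Delta\|^2).
\]
It therefore suffices to show $D\Psi(X_0)\,D\Pi(Z)[\Delta]=\Delta$ for tangent $\Delta$. Since $\Pi\circ\Psi=\mathrm{Id}$ near $X_0$ and $\Psi\circ\Pi|_{\Hyp}=\mathrm{Id}_{\Hyp}$ near $Z$ (local inverse), differentiating along a curve $\gamma(t)\subset\Hyp$ with $\gamma(0)=Z$ and $\gamma'(0)=\Delta$ gives $D\Psi(X_0)D\Pi(Z)[\Delta]=\Delta$. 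This yields $P(Z+\Delta)=Z+\Delta+o(\Delta)$.

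The main obstacle is justifying that $\Psi$ is genuinely a smooth local inverse, i.e.\ that $\Pi|_{\Hyp}$ is a local diffeomorphism onto $\mathcal{N}_r$ near $Z$. This requires distinct nonzero singular values, so that $U$, $S$, $V$ depend smoothly on $X$ by standard perturbation theory for simple singular values. It also requires that $\Hyp$ is a manifold there of the same dimension as $\mathcal{N}_r$, which is Proposition~\ref{prop:equivM}. Near points with repeated singular values the argument breaks down, which is why the hypothesis is needed.
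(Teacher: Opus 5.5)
Your proposal is correct and follows essentially the same route as the paper: smoothness of $P$ near $Z$ from SVD perturbation theory (which you phrase as $P=\Psi\circ\Pi$), the fact that $P$ fixes $\Hyp$ pointwise near $Z$, and differentiation along a curve in $\Hyp$ to get $DP(Z)[\Delta]=\Delta$, followed by a first-order Taylor expansion. You additionally write out the three elementary claims and the sign-convention caveat, which the paper leaves implicit.
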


\begin{proof}
Fix $(A,B)\in\Hyp$ and assume $Z:=AB$ has distinct singular values. 
By standard perturbation theory for the SVD (with a consistent choice of signs), the reduced SVD
$Z \mapsto (U,S,V)$ depends $C^1$-smoothly on $Z$ in a neighborhood of $Z$, hence the map
\[
P(A',B') = \bigl(U(A'B')\,S(A'B')^{1/2},\; S(A'B')^{1/2} V(A'B')^\top\bigr)
\]
is $C^1$ in $(A',B')$ near $(A,B)$.
Moreover, $P$ fixes $\Hyp$: if $(A',B')\in\Hyp$ then $P(A',B')=(A',B')$.

Let $\Delta\in T_{(A,B)}\Hyp$. By the definition of the tangent space of an embedded submanifold,
there exists a $C^1$ curve $\gamma:(-\epsilon,\epsilon)\to\Hyp$ with
$\gamma(0)=(A,B)$ and $\dot\gamma(0)=\Delta$. Since $P$ fixes $\Hyp$ pointwise,
$P(\gamma(t))=\gamma(t)$ for all $t$ small. Differentiating at $t=0$ and using the chain rule yields
\[
DP(A,B)[\Delta] = \frac{d}{dt}\Big|_{t=0} P(\gamma(t)) = \frac{d}{dt}\Big|_{t=0} \gamma(t) = \Delta.
\]
Because $P$ is $C^1$, its first-order expansion at $(A,B)$ gives, for any $\varepsilon\to 0$,
\[
P\bigl((A,B)+\varepsilon \Delta\bigr) = P(A,B) \;+\; \varepsilon\,DP(A,B)[\Delta] \;+\; o(\varepsilon)
= (A,B) \;+\; \varepsilon \Delta \;+\; o(\varepsilon).
\]
\end{proof}

\section{Signed-GD conditioning and comparison with RefLoRA}
\label{sec:balora_vs_reflora}

\subsection{Signed-GD conditioning}
\label{appsubsec:signed_gd_conditioning}

\begin{proof}[Proof of Proposition~\ref{prop:scaled_signgd_conditioning}]
Fix $\varepsilon>0$. We split the argument into four steps.

\paragraph{Step 1: local $\ell^\infty$ smoothness.}
Since $f$ is twice continuously differentiable and $\theta_t\to \theta^\star$, the Hessian $\nabla^2 f(\theta)$ is arbitrarily close to $H$ when $\theta$ is sufficiently close to $\theta^\star$. Because the map $M\mapsto \sup_{u\neq 0}\langle Mu,u\rangle/\lVert u\rVert_\infty^2$ is continuous on symmetric matrices, there exists a neighborhood $U$ of $\theta^\star$ such that $\sup_{u\neq 0}\langle \nabla^2 f(\eta)u,u\rangle/\lVert u\rVert_\infty^2 \le L_\infty+\varepsilon$ for every $\eta\in U$. Therefore, for all $t$ large enough and all sufficiently small $u$ such that the whole segment $\theta_t+s u$ stays in $U$, integrating the Hessian bound along this segment gives the local descent lemma
\[
f(\theta_t+u)
\le
f(\theta_t)+\langle \nabla f(\theta_t),u\rangle
+\frac{L_\infty+\varepsilon}{2}\lVert u\rVert_\infty^2
\]
for all sufficiently small $u$.

\paragraph{Step 2: local $\ell^\infty$ Polyak--Lojasiewicz inequality.}
Let $\delta_t \coloneqq \theta_t-\theta^\star$, and decompose it as $\delta_t=\delta_{t,\perp}+\delta_{t,0}$ with $\delta_{t,\perp}\in \ker(H)^\perp$ and $\delta_{t,0}\in\ker(H)$. In our setting, the local minimizer manifold generated by the gauge invariance is tangent to $\ker(H)$ at $\theta^\star$, so the quadratic part of $f$ is governed by the normal component $\delta_{t,\perp}$. More precisely, there exist remainders $r_t$ and $\rho_t$ such that
\[
\nabla f(\theta_t)=H\delta_{t,\perp}+r_t,
\qquad
f(\theta_t)-f(\theta^\star)=\frac12\langle H\delta_{t,\perp},\delta_{t,\perp}\rangle+\rho_t,
\]
with $\lVert r_t\rVert_1=o(\lVert \delta_{t,\perp}\rVert_2)$ and $\rho_t=o(\lVert \delta_{t,\perp}\rVert_2^2)$ as $t\to+\infty$. Consider the scale-invariant ratio $\phi(z)\coloneqq \lVert Hz\rVert_1^2/\langle Hz,z\rangle$ on $\ker(H)^\perp\setminus\{0\}$. Since $\phi$ is continuous on the Euclidean unit sphere of $\ker(H)^\perp$ and this sphere is compact, its minimum is exactly $\mu_\infty$. Therefore, if $\delta_{t,\perp}\neq 0$,
\[
\frac{\lVert \nabla f(\theta_t)\rVert_1^2}{2(f(\theta_t)-f(\theta^\star))}
=
\phi(\delta_{t,\perp})+o(1)
\ge
\mu_\infty-\varepsilon
\]
for all $t$ large enough, while if $\delta_{t,\perp}=0$ then $\theta_t$ already lies on the local minimizer manifold and the inequality below is trivial. Hence, for all $t$ large enough,
\[
\lVert \nabla f(\theta_t)\rVert_1^2
\ge
2(\mu_\infty-\varepsilon)\bigl(f(\theta_t)-f(\theta^\star)\bigr)
\]

\paragraph{Step 3: one-step contraction.}
Now set $u_t \coloneqq -\gamma \lVert \nabla f(\theta_t)\rVert_1 s_t$ with $s_t\in \partial \lVert \nabla f(\theta_t)\rVert_1$. By definition of the $\ell^1$ subdifferential, $\langle \nabla f(\theta_t),s_t\rangle = \lVert \nabla f(\theta_t)\rVert_1$ and $\lVert s_t\rVert_\infty\le 1$. Since $\nabla f(\theta_t)\to 0$, we have $u_t\to 0$, so for all $t$ large enough the previous local smoothness estimate applies with $u=u_t$. This gives
\[
f(\theta_{t+1})
\le
f(\theta_t)
-\gamma \lVert \nabla f(\theta_t)\rVert_1^2
+\frac{L_\infty+\varepsilon}{2}\gamma^2 \lVert \nabla f(\theta_t)\rVert_1^2.
\]
Combining the last two inequalities yields
\[
f(\theta_{t+1})-f(\theta^\star)
\le
\left(1-2\gamma\left(1-\frac{\gamma(L_\infty+\varepsilon)}{2}\right)(\mu_\infty-\varepsilon)\right)
\bigl(f(\theta_t)-f(\theta^\star)\bigr)
\]
for all $t$ large enough. Taking the $\limsup$ as $t\to+\infty$, then letting $\varepsilon\downarrow 0$, proves that for every $0<\gamma<2/L_\infty$,
\[
\limsup_{t\to+\infty}
\frac{f(\theta_{t+1})-f(\theta^\star)}{f(\theta_t)-f(\theta^\star)}
\le
1-2\gamma\left(1-\frac{\gamma L_\infty}{2}\right)\mu_\infty.
\]
The right-hand side is minimized at $\gamma = 1/L_\infty$, which gives the factor $1-\mu_\infty/L_\infty = 1-1/\kappa_\infty$.

\paragraph{Step 4: comparison with Euclidean conditioning.}
It remains to compare $\kappa_\infty$ with $\kappa$. For every $v\in\R^D$, one has $\lVert v\rVert_2\le \lVert v\rVert_1\le \sqrt{D}\,\lVert v\rVert_2$ and $\lVert v\rVert_\infty\le \lVert v\rVert_2\le \sqrt{D}\,\lVert v\rVert_\infty$. Hence
\[
\mu
=
\inf_{u\in\ker(H)^\perp\setminus\{0\}}\frac{\lVert Hu\rVert_2^2}{\langle Hu,u\rangle}
\le
\mu_\infty
\le
D\inf_{u\in\ker(H)^\perp\setminus\{0\}}\frac{\lVert Hu\rVert_2^2}{\langle Hu,u\rangle}
=
D\mu,
\]
and similarly
\[
L
=
\sup_{u\neq 0}\frac{\langle Hu,u\rangle}{\lVert u\rVert_2^2}
\le
L_\infty
\le
D\sup_{u\neq 0}\frac{\langle Hu,u\rangle}{\lVert u\rVert_2^2}
=
DL.
\]
Dividing the two inequalities yields $\kappa/D \le \kappa_\infty \le D\kappa$, which completes the proof.
\end{proof}

\subsection{Comparison with RefLoRA}
\label{appsubsec:comparison_reflora}

BaLoRA and RefLoRA both enforce balance along the optimization, but they do so on different sets. RefLoRA rebalances the factors so that its iterates satisfy $(A_k,B_k)\in\Bal$, where $\Bal \coloneqq \{(A,B)\mid A^\top A = BB^\top\}$. In contrast, BaLoRA projects onto the smaller hyperbalanced manifold $\Hyp \subset \Bal$, where $\Hyp \coloneqq \{(A,B)\in\Bal \mid A^\top A = BB^\top \in \mathbb{D}_+^r\}$. Thus, RefLoRA is not constrained to stay on $\Hyp$: its iterates remain balanced, but may explore the larger manifold $\Bal$.

To compare these two geometries through the signed-GD proxy from Proposition~\ref{prop:scaled_signgd_conditioning}, we consider the matrix factorization loss $f(A,B)=\frac12\lVert Y-AB\rVert_F^2$ and evaluate the Hessian-based quantity $\kappa_\infty=L_\infty/\mu_\infty$ at random balanced and hyperbalanced global minimizers associated with random rank-$r$ targets $Y$. The resulting histograms are reported in Figure~\ref{fig:signed_gd_conditioning_compare}. They show that the $\ell^\infty$ geometry relevant to scaled sign-GD can distinguish $\Bal$ and $\Hyp$.

\begin{figure*}[t]
    \centering
\setlength{\tabcolsep}{2pt}
\renewcommand{\arraystretch}{1.0}
\begin{tabular}{ccccc}
\includegraphics[width=0.18\textwidth]{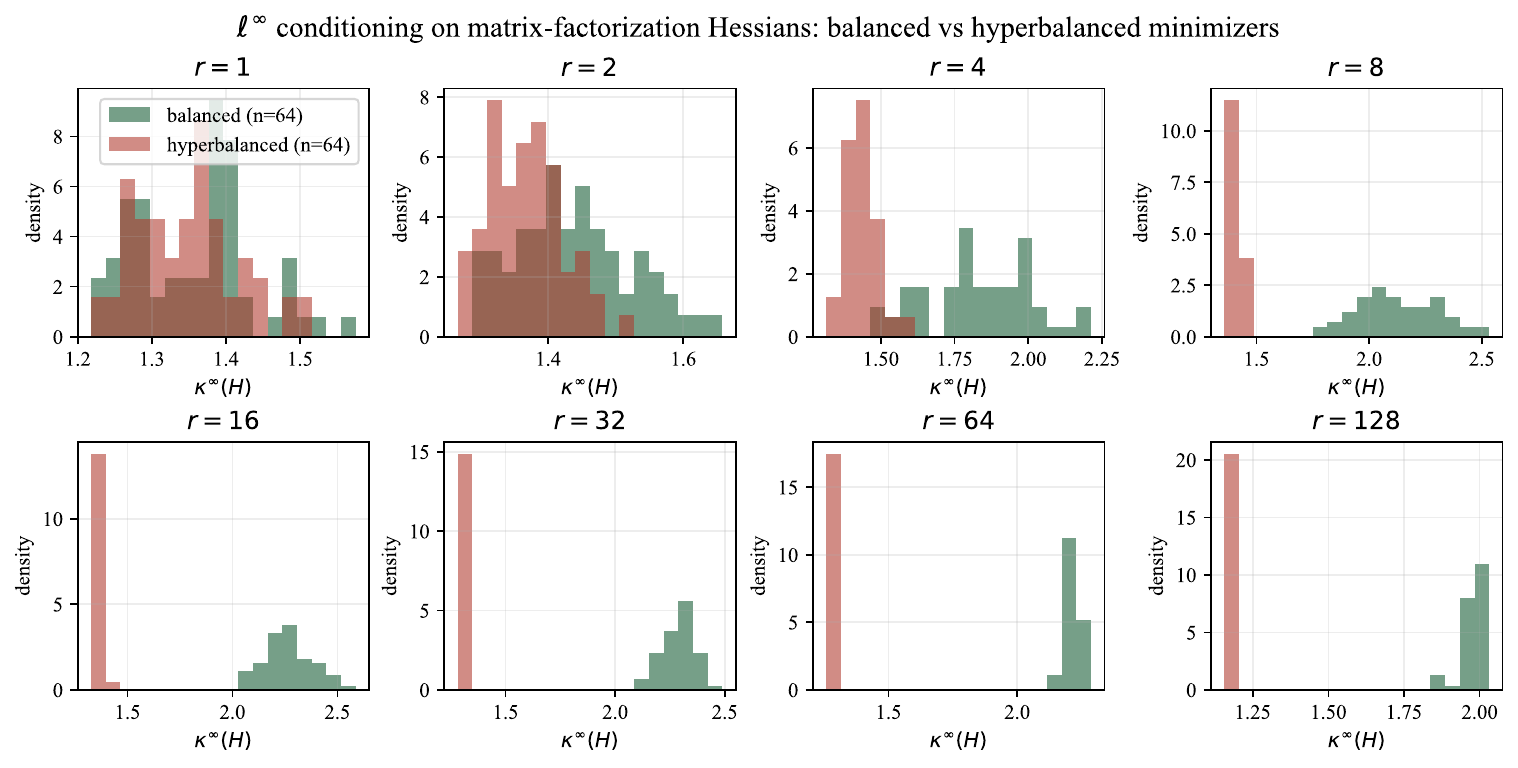} &
\includegraphics[width=0.18\textwidth]{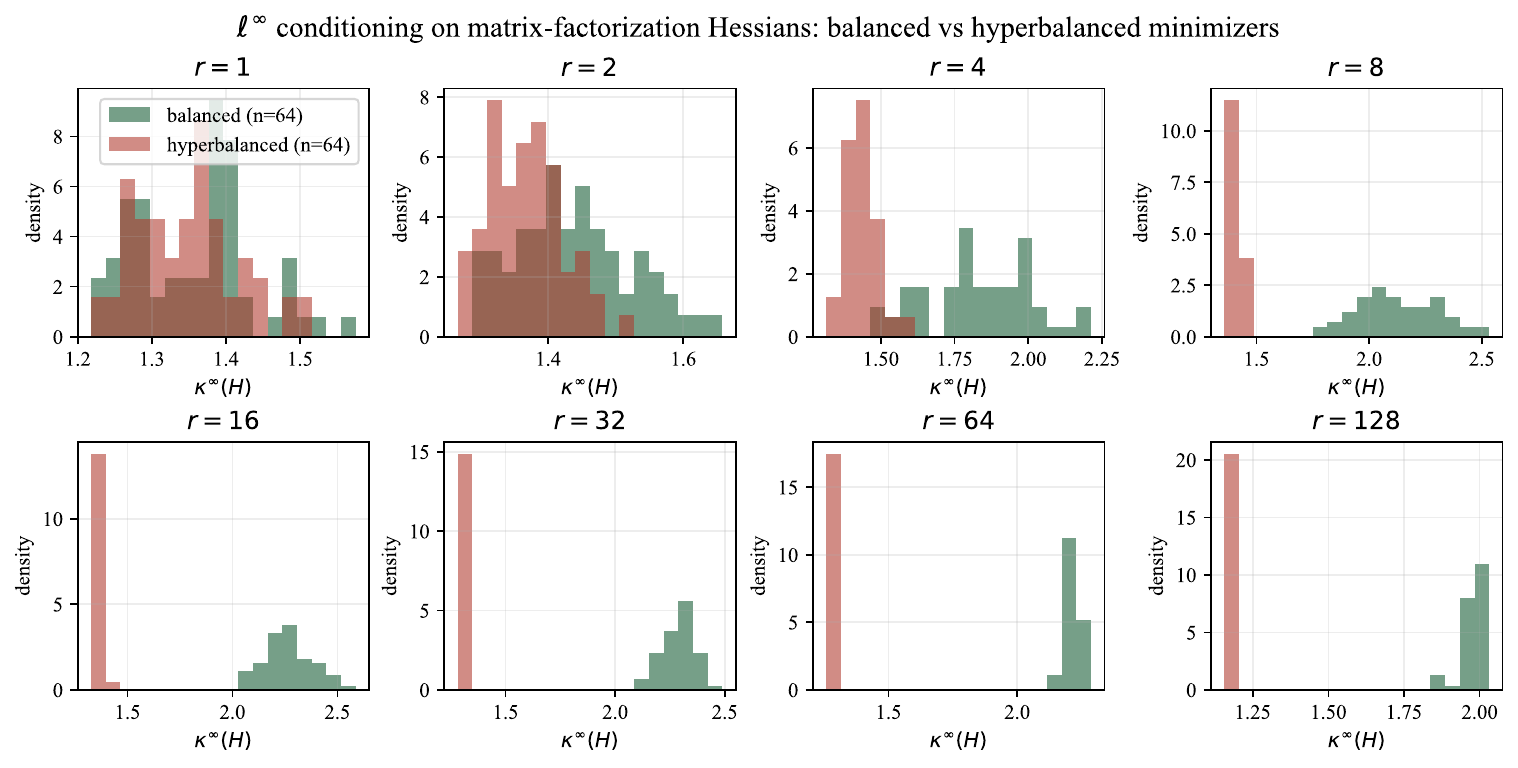} &
\includegraphics[width=0.18\textwidth]{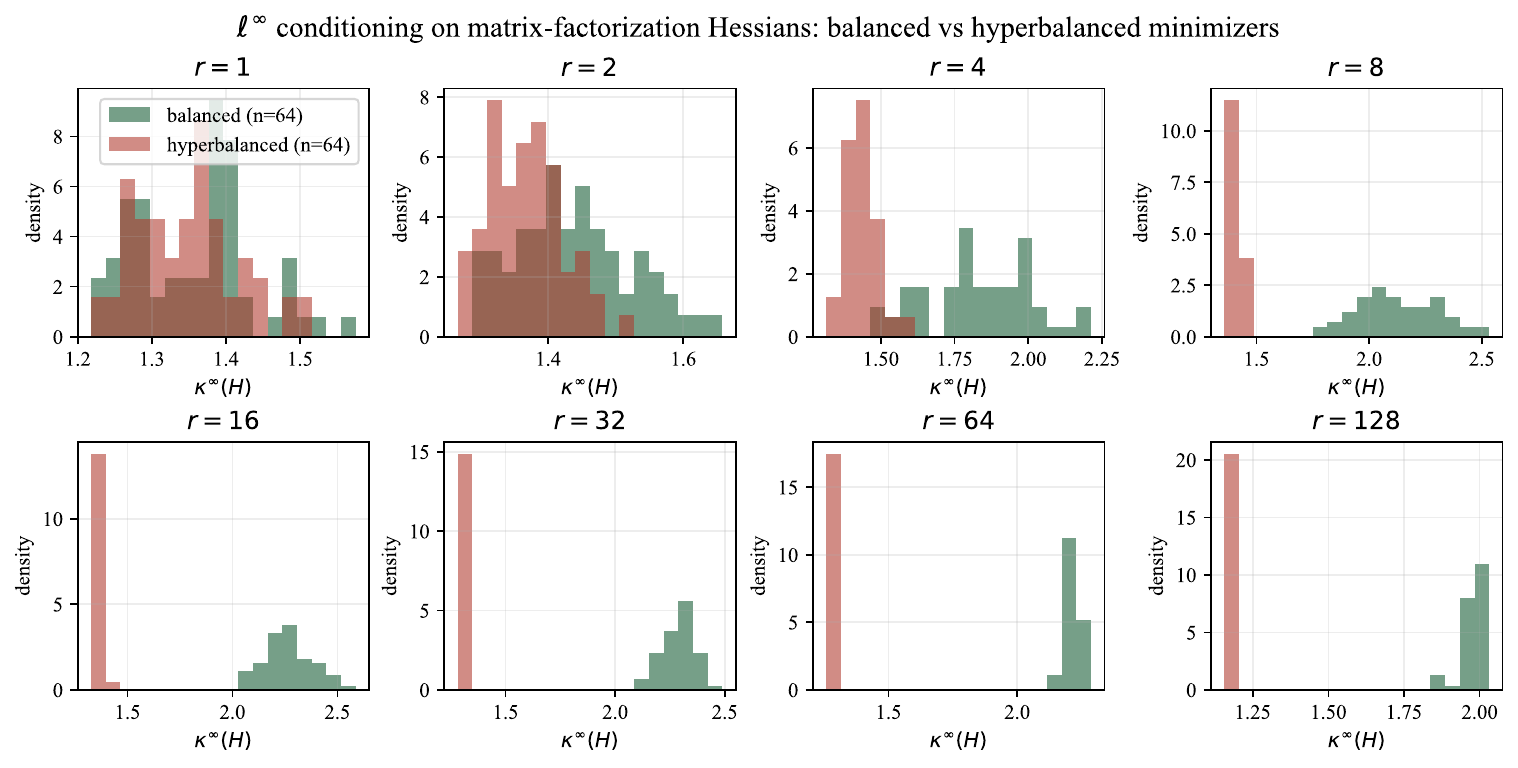} &
\includegraphics[width=0.18\textwidth]{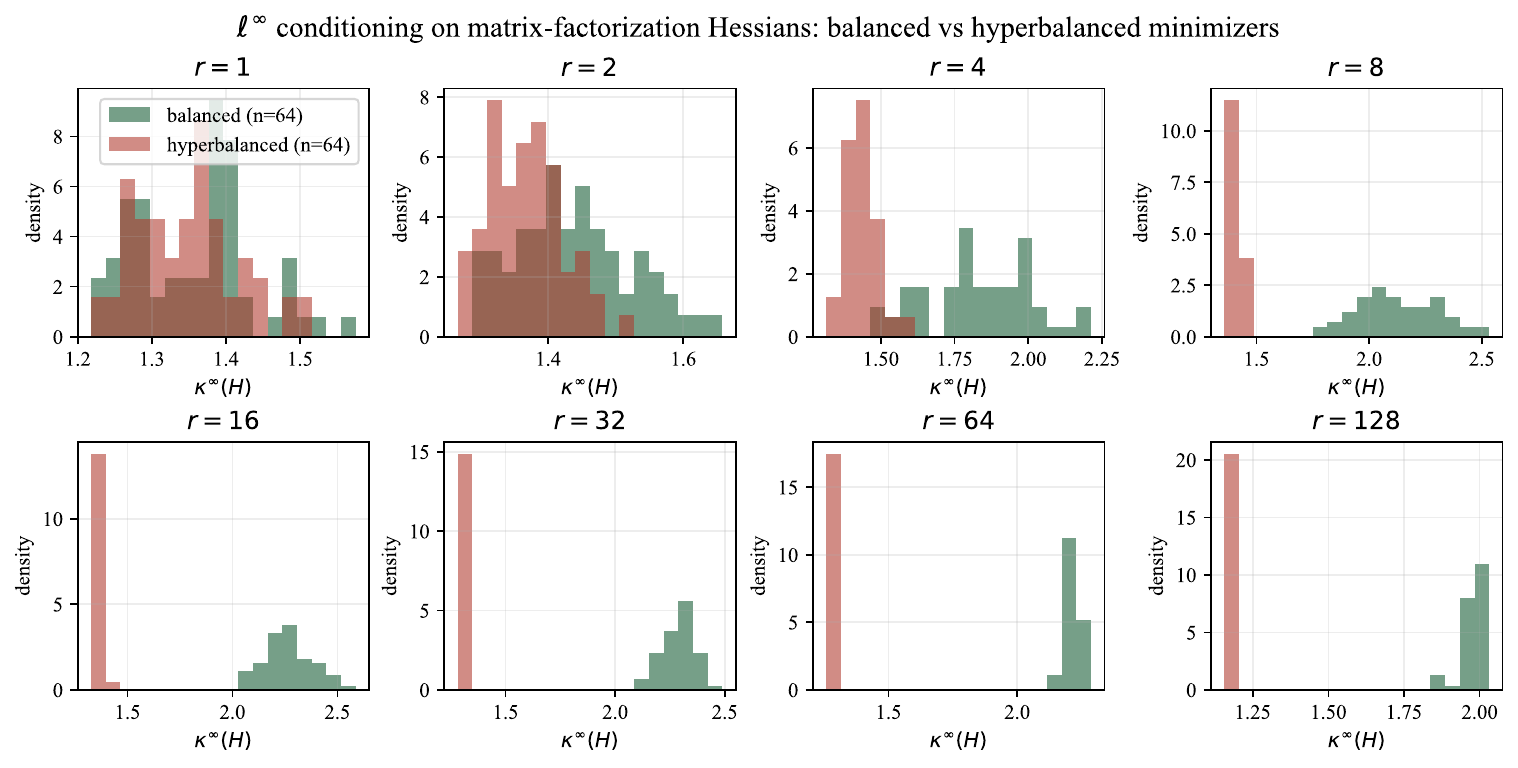} &
\includegraphics[width=0.18\textwidth]{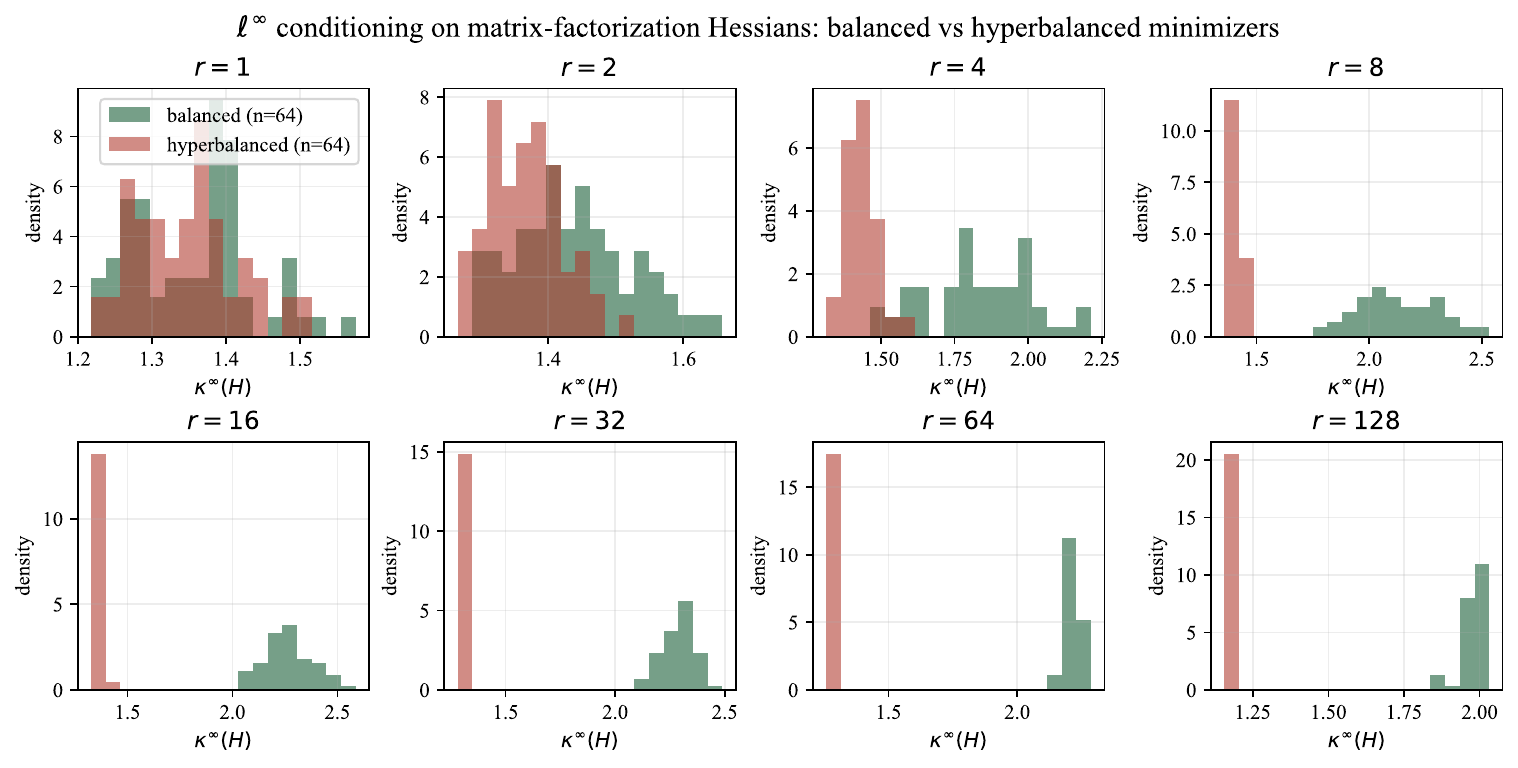} \\
$r=2$ & $r=4$ & $r=8$ & $r=16$ & $r=64$
\end{tabular}
    \caption{Histograms of the scaled-sign-GD condition number $\kappa_\infty=L_\infty/\mu_\infty$, at random balanced (green histograms) and hyperbalanced (brown histograms) minimizers of the matrix factorization loss $f(A,B)=\frac12\lVert Y-AB\rVert_F^2$. 
    We compare $(A,B)\in\Bal=\{(A,B)\mid A^\top A=BB^\top\}$ with $(A,B)\in\Hyp=\{(A,B)\in\Bal\mid A^\top A\in\mathbb{D}_+^r\}$ for $n=m=200$, ranks $r\in\{2,4,8,16,64\}$, and independent random rank-$r$ targets $Y$ paired with balanced and hyperbalanced minimizers.}
    \label{fig:signed_gd_conditioning_compare}
\end{figure*}

\section{Additional Empirical Results}

We provide in this section some additional figures.

\begin{figure}
    \centering
    \includegraphics[width=0.6\linewidth]{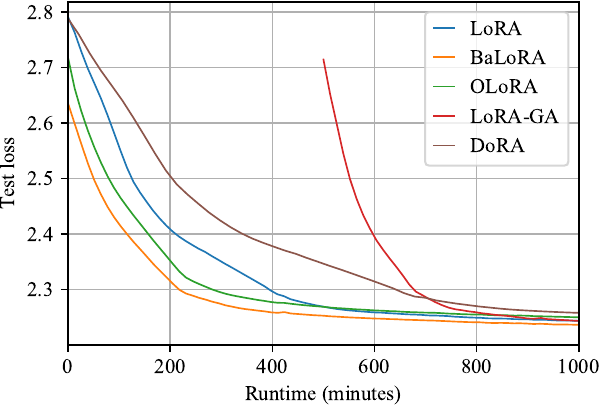}
    \caption{Test loss when fine-tuning Llama-3.2-3B on Wikitext as a function of the training time. The initialization time is taken into account; the full gradient estimation in the LoRA-GA init takes $\approx 500$ minutes, which makes it slower than the other methods.}
    \label{fig:wikitext_llama_runtime}
\end{figure}

\begin{figure}
    \centering
    \includegraphics[width=0.9\linewidth]{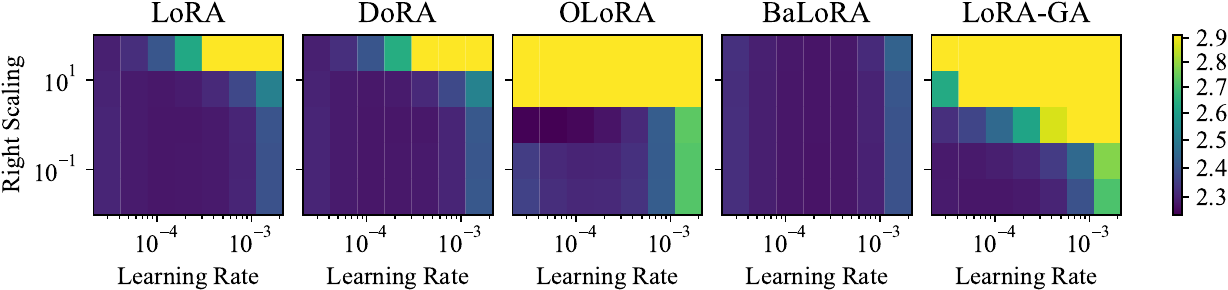}
    \caption{Hyperparameter sensitivity analysis of BaLoRA, LoRA and variants for a grid of learning rates and initialization scalings, when fine-tuning Llama-3.2-3B on Wikitext-2-raw-v1. We observe that BaLoRA is significantly more stable to high scalings than the other methods, and more stable to high learning rates than OLoRA and LoRA-GA.}
    \label{appfig:sensitivity_wikitext}
\end{figure}

\begin{table}[t]
  \caption{Results of fine-tuning Llama-3.2-3B on GSM8K. Best loss is in bold, second best loss underlined.}
  \label{table:gsm8k_llama}
  \begin{center}
    \begin{small}
        \begin{tabular}{lcccr}
          \toprule
          Method  & Epoch $1$  & Epoch $2$   \\
          \midrule
          \textsc{LoRA}    & \underline{0.498} & 0.492   \\
          \textsc{BaLoRA} & 0.506 & 0.493  \\
          \textsc{DoRA} & \textbf{0.497} & \underline{0.492} \\
          \textsc{OLoRA} & 0.510 & 0.503 \\
          \textsc{LoRA-GA} & 0.504 & \textbf{0.491} \\
          \bottomrule
        \end{tabular}
    \end{small}
  \end{center}
  \vskip -0.1in
\end{table}

\begin{figure}
    \centering
    \includegraphics[width=0.6\linewidth]{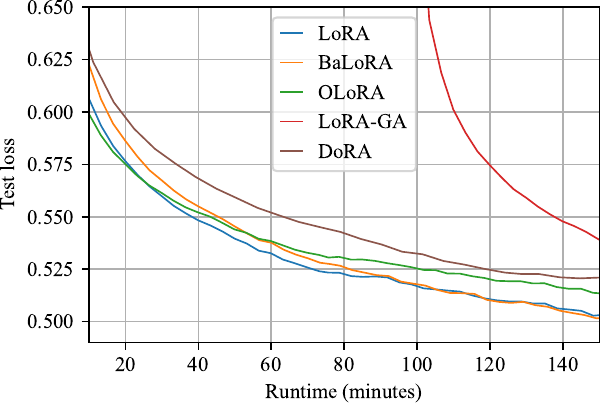}
    \caption{Test loss evolution over fine-tuning of Llama-3.2-3B on GSM8K as a function of the training time. The initialization time is taken into account, which explains why LoRA-GA is slower than the other methods.}
    \label{fig:gsm8k_runtime}
\end{figure}

\begin{table}[t]
  \caption{Results of fine-tuning Llama-3.2-3B and Qwen-2.5-3B on Wikitext-2-raw-v1. Best loss is in bold, second best loss underlined.}
  \label{table:wikitext_llama}
  \begin{center}
    \begin{small}
        \begin{tabular}{lcccr}
          \toprule
          Method / Model  & Llama 3B  & Qwen 3B   \\
          \midrule
          \textsc{LoRA}    & 2.278 & \underline{2.257}   \\
          \textsc{BaLoRA} & \underline{2.274} & \textbf{2.251}  \\
          \textsc{DoRA} & 2.278 & 2.257 \\
          \textsc{OLoRA} & \textbf{2.241} & 2.276 \\
          \textsc{LoRA-GA} & 2.276 & 2.264 \\
          \bottomrule
        \end{tabular}
    \end{small}
  \end{center}
  \vskip -0.1in
\end{table}

\begin{table}[t]
      \caption{Results of fine-tuning Qwen-2.5-3B on a 100k-samples subset of MetaMathQA. Best loss is in bold, second best loss underlined.}
      \label{table:metamath_qwen}
      \begin{center}
        \begin{small}
            \begin{tabular}{lcccr}
              \toprule
              Method / Model & Qwen 3B   \\
              \midrule
              \textsc{LoRA}    & \textbf{0.142}   \\
              \textsc{BaLoRA} &  0.144  \\
              \textsc{DoRA} & \underline{0.142} \\
              \textsc{OLoRA} & 0.144 \\
              \textsc{LoRA-GA} & 0.143 \\
              \bottomrule
            \end{tabular}
        \end{small}
      \end{center}
      \vskip -0.1in
    \end{table}

\begin{table}[t]
  \caption{Peak GPU memory of LoRA and variants when fine-tuning Llama-3.2-3B on Wikitext-2-raw-v1 with $r=128$, a batch size of 8, averaged over 10 runs.}
  \label{apptable:memory_consumption}
  \begin{center}
    \begin{small}
        \begin{tabular}{lcc}
          \toprule
          Method  & Mean peak GPU memory (GB) & Standard deviation (GB)    \\
          \midrule
          \textsc{LoRA}    & 32.63  & 0.03  \\
          \textsc{BaLoRA} & 32.63  & 0.03 \\
          \textsc{DoRA} & 36.78 & 0.02 \\
          \textsc{OLoRA} & 32.63 & 0.03 \\
          \textsc{LoRA-GA} & 32.63 & 0.03 \\
          \bottomrule
        \end{tabular}
    \end{small}
  \end{center}
  \vskip -0.1in
\end{table}

\begin{table*}[t]
  \caption{Impact of the rank of the adapters on the final train loss when fine-tuning Qwen-2.5-3B on a 1B subset of the DeepMind Mathematics Dataset. For each method, the learning rate and the scaling at initialization are selected from a sweep on MetaMathQA: concretely, we pick the hyperparameters that achieve the smallest evaluation loss when fine-tuning Qwen on MetaMathQA with $r=8$, which provides a fair procedure for choosing the hyperparameters of each method. Best loss is in bold, second best loss underlined. BaLoRA is the best method for almost all ranks, and has a clear edge for larger ranks.}
  \label{appfig:full_table_rank}
  \begin{center}
    \begin{small}
        \begin{tabular}{lccccccc}
          \toprule
          Method & $r=2$ & $r=4$ & $r=8$ & $r=16$ & $r=32$ & $r=64$ & $r=128$ \\
          \midrule
          \textsc{LoRA}      & 1.050             & 1.041          & 1.035          & 1.032          & 1.031          & 1.030          & 1.030          \\
          \textsc{DoRA}      & 1.050             & 1.041          & 1.035          & 1.032          & 1.031          & 1.030          & 1.030          \\
          \textsc{LoRA-RITE} & 1.059             & 1.051          & 1.047          & 1.045          & 1.046          & 1.052          & 1.069          \\
          \textsc{LoRO}      & 1.251             & 1.253          & 1.258          & 1.266          & 1.271          & 1.272          & 1.267          \\
          \textsc{OLoRA}     & 1.051             & 1.044          & 1.039          & 1.037          & 1.036          & 1.036          & 1.036          \\
          \textsc{RefLoRA}   & \textbf{1.046}    & \underline{1.035} & \underline{1.027} & \underline{1.023} & \underline{1.024} & \underline{1.027} & \underline{1.032} \\
          \textsc{BaLoRA}    & \underline{1.046} & \textbf{1.035} & \textbf{1.026} & \textbf{1.020} & \textbf{1.017} & \textbf{1.015} & \textbf{1.014} \\
          \bottomrule
        \end{tabular}
    \end{small}
  \end{center}
  \vskip -0.1in
\end{table*}

\begin{table}[t]
  \caption{Impact of the rank of the adapters on the final train loss when fine-tuning Qwen-2.5-3B on a 1B subset of ArXiv. For each method, the learning rate and the scaling at initialization are selected from a sweep on MetaMathQA: concretely, we pick the hyperparameters that achieve the smallest evaluation loss when fine-tuning Qwen on MetaMathQA, which provides a fair procedure for choosing the hyperparameters of each method. Best loss is in bold, second best loss underlined. We observe that LoRA, DoRA and LoRA-RITE are less robust to transferring hyperparameters across a wide range of ranks: they diverge for some rank values. Moreover, BaLoRA has a clear edge for larger ranks.}
  \label{table:arxiv_rank_sweep}
  \begin{center}
    \begin{small}
        \begin{tabular}{lccccccc}
          \toprule
          Method & $r=2$ & $r=4$ & $r=8$ & $r=16$ & $r=32$ & $r=64$ & $r=128$ \\
          \midrule
          \textsc{LoRA}     & 1.368 & 1.367 & 1.365 & 1.399 & \underline{1.362} & NaN & \underline{1.360} \\
          \textsc{DoRA}     & 1.423 &  1.422 & NaN    & 1.418 & 1.416 & 1.416 & 1.438 \\
          \textsc{LoRA-RITE} & 1.369 &  NaN    & 1.367 & 1.366 & 1.365 & 1.365 & 1.430 \\
          \textsc{LoRO}     & 1.378 &  1.375 & 1.376 & 1.379 & 1.379 & 1.379 & 1.379 \\
          \textsc{OLoRA}    & \textbf{1.367} & \textbf{1.366} & \textbf{1.365} & 1.364 & 1.363 & 1.362 & 1.362 \\
          \textsc{RefLoRA}  & \underline{1.368} & \underline{1.367} & \underline{1.365} & \textbf{1.363} & 1.362 & \underline{1.362} & 1.364 \\
          \textsc{BaLoRA}   & 1.369 & 1.368 & 1.366 & \underline{1.364} & \textbf{1.362} & \textbf{1.360} & \textbf{1.357} \\
          \bottomrule
        \end{tabular}
    \end{small}
  \end{center}
  \vskip -0.1in
\end{table}

\begin{figure}
    \centering
    \includegraphics[width=0.6\linewidth]{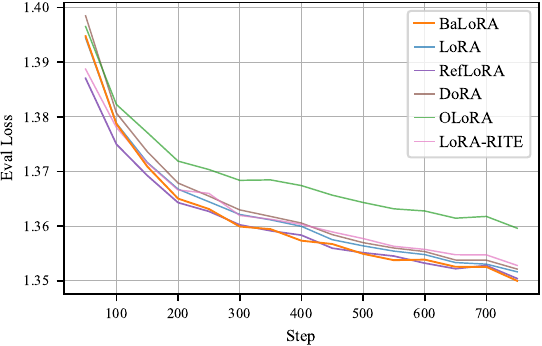}
    \caption{Evaluation loss of BaLoRA versus LoRA variants when fine-tuning Qwen-2.5-3B on Alpaca with $r=8$, selecting optimal per-method and per-dataset hyperparameters based on the MetaMathQA sweep. BaLoRA and RefLoRA, which impose balanced iterations, outperform the other methods.}
    \label{fig:qwen_small_datasets}
\end{figure}

\begin{figure}
    \centering
    \includegraphics[width=0.6\linewidth]{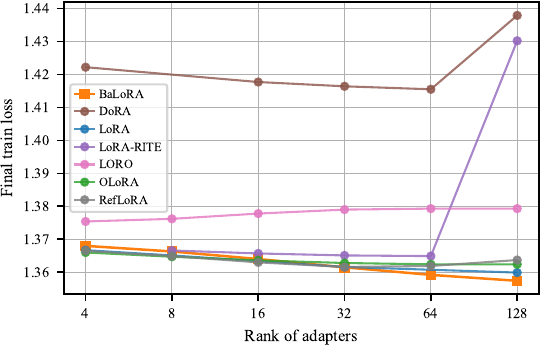}
    \caption{Impact of the rank of the adapters on the final train loss when fine-tuning Qwen-2.5-3B on a 1B subset of ArXiv, with the procedure explained in Table \ref{table:arxiv_rank_sweep}. LoRA, RefLoRA and OLoRA perform well for small rank values, while BaLoRA outperforms the other methods for larger ranks. }
    \label{appfig:arxiv_all_ranks}
\end{figure}

\end{document}